\theoremstyle{definition}
\newtheorem{dfn}{Definition}
\theoremstyle{plain}
\newtheorem{prop}{Proposition}
\newtheorem{thm}{Theorem}
\newtheorem*{thm-other}{Theorem}
\newtheorem{coro}{Corollary}
\newtheorem{lemma}{Lemma}
\theoremstyle{remark}
\newtheorem{remark}{Remark}
\newtheorem{example}{Example}
\newtheoremstyle{example_contd}
{\topsep} {\topsep}%
{\upshape}
{}
{\bfseries}
{}
{\newline}
{\thmname{#1} \thmnumber{ #2}\thmnote{#3}\enspace(continued)}
\theoremstyle{example_contd}
\newcommand{\mbr}{\mathcal{M}}
\newcommand{\dir}{\textsf{dir}}
\newcommand{\sfp}{\textsf{p}}
\newcommand{\sfm}{\textsf{m}}
\newcommand{\sfo}{\textsf{o}}
\newcommand{\sfd}{\textsf{d}}
\newcommand{\sfs}{\textsf{s}}
\newcommand{\sff}{\textsf{f}}
\newcommand{\sfeq}{\textsf{eq}}
\newcommand{\sffi}{\textsf{fi}}
\newcommand{\sfsi}{\textsf{si}}
\newcommand{\sfoi}{\textsf{oi}}
\newcommand{\sfdi}{\textsf{di}}
\newcommand{\sfmi}{\textsf{mi}}
\newcommand{\sfpi}{\textsf{pi}}
\newcommand{\net}{\mathcal{N}}
\newcommand{\sa}{\mathfrak{a}}
\newcommand{\true}{\textsf{true}}
\newcommand{\false}{\textsf{false}}
\newcommand{\consistency}{\textsc{Consistency}}
\newcommand{\cspan}{\textsc{Cspan}}
\newcommand{\rsat}{\textbf{RSAT}}
\begin{document}

\begin{frontmatter}


\title{Reasoning about Cardinal Directions between Extended Objects: The Hardness Result}


\author{Weiming Liu}
\author{Sanjiang Li\corref{cor1}}
\ead{sanjiang.li@uts.edu.au}
\address{Centre for Quantum Computation and Intelligent Systems,
       Faculty of Engineering and Information Technology, University of Technology,
       Sydney, Australia}
\cortext[cor1]{Corresponding author at:
Centre for Quantum Computation and Intelligent Systems, Faculty of
Engineering and Information Technology, University of Technology
Sydney, P.O. Box 123, Broadway, NSW 2007, Australia
}

\begin{abstract}
The cardinal direction calculus (CDC) proposed by Goyal and Egenhofer is a very expressive qualitative calculus for directional information of extended objects. Early work has shown that  consistency checking of \textit{complete} networks of basic CDC constraints is tractable while reasoning with the CDC in general is NP-hard. This paper shows, however, if allowing some constraints unspecified, then consistency checking of \textit{possibly incomplete} networks of basic CDC constraints is already intractable.  This draws a sharp boundary between the tractable and intractable subclasses of the CDC. The result is achieved by a reduction from the well-known 3-SAT problem.
\end{abstract}

\begin{keyword}
Qualitative spatial reasoning \sep
Cardinal direction calculus \sep
NP-hardness \sep
Consistency checking \sep
Reduction
\end{keyword}

\end{frontmatter}

\section{Introduction}
Direction relations between extended spatial objects are important commonsense knowledge.
Most existing direction relation models approximate a spatial object by a point (e.g. its centroid) or a box. This is certainly imprecise in real-world applications such as describing the directional information between two countries, say, Portugal and Spain \cite{SkiadopoulosK04}.

Goyal and Egenhofer \cite{GoyalE97,Goyal2000} proposed  the \textit{direction relation matrix} (DRM) for representing direction relations
between \textit{connected} plane regions. The original description of the DRM lacks formality and does not consider limit cases. This problem was fixed in \cite{SkiadopoulosK04}, where the model is called the \textit{cardinal direction calculus} (CDC).
When representing the direction of the primary object to a reference object, the CDC approximates the reference object by a box, while leaving the primary object unaltered. Therefore, the exact geometry of the primary object is used to a certain extent in the representation of the direction. This makes the CDC very expressive. As a matter of fact, this
calculus has 218 basic relations, each of which represents some \textit{definite} directional information between objects. Non-basic relations, which are unions of basic relations, represent \textit{indefinite} directional information between objects.

The CDC as a qualitative calculus is unlike other well-known qualitative calculi such as the Interval Algebra (IA) \cite{Allen83} and RCC8 \cite{RandellCC92}. The identity relation is not a CDC relation, but is contained in a unique basic CDC relation. The CDC is closed under neither converse nor composition. This means, the converse of a basic CDC relation (or the composition of two basic CDC relations) may be not a CDC relation, i.e. it may be not the union of some basic CDC relations \cite{GoyalE97,CiceroneF04,SkiadopoulosK04,Liu+2010}.

Consistency checking is the central reasoning problem in the CDC (and any other
qualitative calculus). Given a \textit{complete} network of CDC constraints
\begin{equation}\label{eq:sat}
\mathcal{N}=\{v_i\delta_{ij} v_j\}_{i,j=1}^n\ \ \mbox{(each}\
\delta_{ij}\ \mbox{is\ a\ CDC\ relation)}
\end{equation}
over $n$ spatial variables $v_1,\cdots,v_n$, we say $\mathcal{N}$ is \textit{consistent} (or \textit{satisfiable}) if there exist $n$ \emph{connected} plane regions $a_1,\cdots,a_n$ such that $(a_i,a_j)$ is an instance of $\delta_{ij}$ for any $1\leq i,j\leq n$. If the relations $\delta_{ij}$ are all  taken from a subclass $\mathcal{S}$ of the CDC, we write $\rsat(\mathcal{S})$ for the consistency decision problem restricted to $\mathcal{S}$. In particular, \rsat(CDC) denotes the consistency decision problem in the CDC.

To solve the general consistency problem of a qualitative calculus, an often used approach is to devise local consistency algorithms to completely solve
the decision problem over a subclass $\mathcal{S}$ of the calculus, which contains all basic relation, and then use backtracking method to solve the whole
decision problem. We call a complete network $\mathcal{N}$ of basic constraints $k$-\textit{consistent} if all subnetworks of $\mathcal{N}$ that involve $k$
variables are consistent.\footnote{Note this notion of $k$-consistency is different from the usual one defined on a finite universe \cite{Dechter03}.}
In particular, for a complete network of basic constraints, 3-consistency is equivalent to \textit{path-consistency} (cf. \cite{LiW06} for detailed discussion).

For the IA and RCC8, it is known that path-consistency decides the consistency of complete basic networks. Examples (cf.  \cite[Example~9]{SkiadopoulosK04} and \cite[Example~4]{Liu+2010}) show that, however, local $k$-consistency, in particular path-consistency, is insufficient to determine the consistency of basic CDC constraints. This makes reasoning with the CDC a very difficult problem. For a long time, it is even not known if consistency checking in the CDC is decidable.

The consistency checking problems with the CDC and/or akin formalisms have been discussed in several literatures \cite{CiceroneF04,SkiadopoulosK04,SkiadopoulosK05,Liu+2010}. In particular, Liu et al. \cite{Liu+2010} provided a cubic
algorithm for checking the consistency of complete networks of basic CDC constraints, and proved that reasoning with the CDC in general is an
NP-Complete problem. This means $\rsat(\mathcal{B}_{dir})$ is tractable but $\rsat(\mbox{CDC})$ is not, where $\mathcal{B}_{dir}$ represents the set of all basic CDC relations. Before this work, we did not know if the CDC has larger tractable subclasses, not to mention finding maximal tractable subclasses and  determining the boundary between the tractable and intractable subclasses of the CDC.

To find maximal tractable subclasses of a qualitative calculus, an often used technique is to propagate the tractability of a subclass to its closure in the calculus under converse, intersection, and weak composition. This technique was first developed in reasoning with the IA \cite{NebelB95}, and then applied to reasoning with RCC8 \cite{RenzN99,LiW06} and general qualitative calculi in which path-consistency decides the consistency of a complete basic network \cite{Renz07}. Because the CDC does not have this property, the applicability of this technique is not immediately clear.

When discussing topological inference, Grigni et al. \cite{GrigniPP95} distinguished between two important special cases of constraint networks that are of interest: In the \textit{explicit} case, all constraints are basic (the relation for each pair of variables is specified). In the \textit{conjunctive}  case some constraints are basic while all the others are unspecified. This latter situation ``arises in geographic applications where the relation between objects in the same map is known, but not explicit information is given about objects in different maps. \cite{GrigniPP95}"

The consistency decision problem of explicit constraint networks corresponds to \rsat($\mathcal{B}_{dir}$), and that of conjunctive constraint networks corresponds to $\rsat(\mathcal{B}_{dir}\cup\{\ast\})$, where $\ast$ is the universal relation, i.e. the union of all basic relations. Having seen that $\mathcal{B}_{dir}$ is a tractable subclass of the CDC \cite{Liu+2010}, we are inclined to believe that $\mathcal{B}_{dir}\cup\{\ast\}$ is also a tractable subclass of the CDC. This paper, however, shows that this is not the case. Note that the universal relation is the weak composition of two basic CDC relations \cite{Liu+2010}. This suggests that the propagation technique used in e.g.  \cite{NebelB95,RenzN99,LiW06,Renz07} fails to find the maximal tractable subclasses of the CDC.

It seems that the CDC is the \emph{first} qualitative calculus in which reasoning with conjunctive constraints has different complexity as reasoning with explicit constraints.

We obtain the result by showing that there is a polynomial reduction from the 3-SAT problem to $\rsat(\mathcal{B}_{dir}\cup\{\ast\})$.
The reduction is devised based on the observation that some \emph{non-CDC} relations are \emph{definable} in the CDC (see Definition \ref{dfn:entailed-relation}). In particular, the \emph{upper left corner} (ULC) relation is defined by using only basic CDC constraints, where two bounded regions have the ULC relation if their minimum bounding rectangles (mbrs) are incomparable but have the same upper left corner point (see Figure \ref{fig:ulcillu} for illustrations). When considering only rectangles, the ULC relation is exactly the union of two basic rectangle relations, namely $\sfs\otimes\sffi$ and its converse $\sfsi\otimes\sff$, where $\sfs,\sff$ are basic relations in the IA, and $\sfsi$ and $\sffi$ are their converses (see Table \ref{tab:int} for the meanings of basic IA relations). Write $\ulcorner$ for the ULC relation.
The consistency decision problem  over $\mathcal{B}_{dir}\cup\{\ast\}$ is, roughly speaking, equivalent to that over $\mathcal{B}_{dir}\cup\{\ast,\ulcorner\}$. The NP-hardness of $\mathcal{B}_{dir}\cup\{\ast,\ulcorner\}$ is then not hard to imagine.

\begin{figure}[htb]
\centering
\begin{tabular}{ccc}
\includegraphics[width=.25\textwidth]{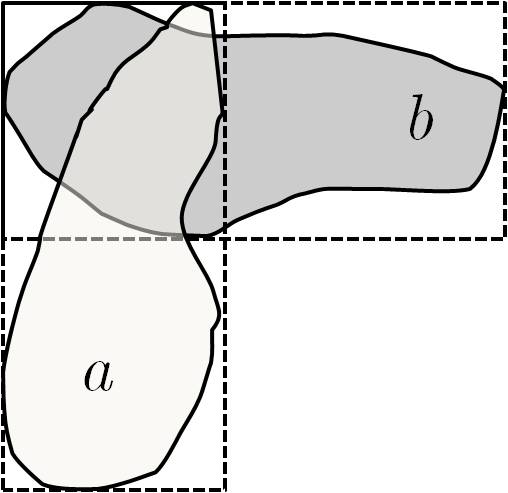}
&
\includegraphics[width=.25\textwidth]{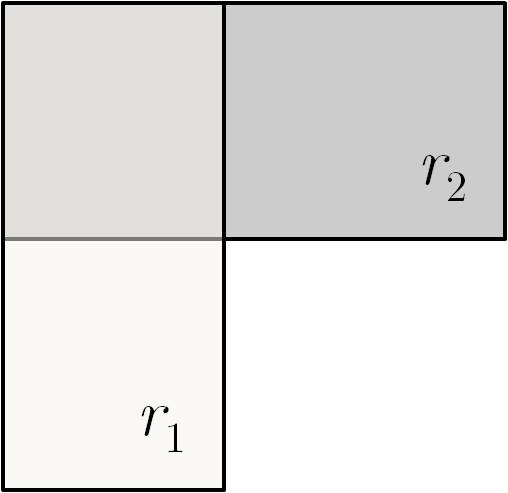}
&
\includegraphics[width=.25\textwidth]{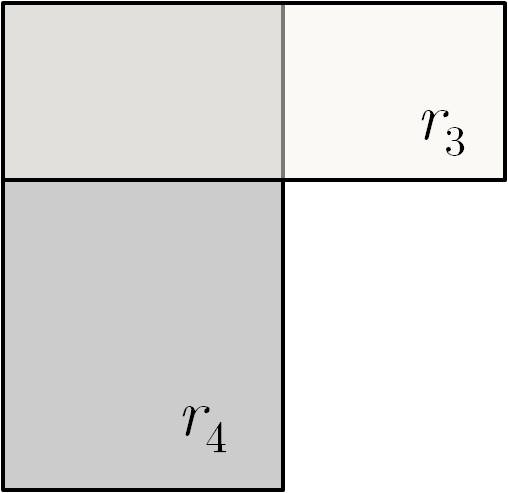}
\\
(a) & (b) & (c)
 \end{tabular}
\caption{Illustrations of the symmetric ULC relation: (a) an instance $(a,b)$ of the ULC relation; (b) an instance $(r_1,r_2)$ of the RA relation $\sfs\otimes\sffi$; (c) an instance $(r_3,r_4)$ of the RA relation $\sfsi\otimes\sff$.}
\label{fig:ulcillu}
\end{figure}

Such a technique for defining relations outside a qualitative calculus was also used in \cite{KrokhinJJ03} for generalizing the tractability of subclasses of the IA.

Our reduction does not require regions to be connected. Therefore, the above NP-hardness result is also applicable to CDC$_d$, a variant of the CDC which deals with  cardinal direction relations between possibly disconnected plane regions \cite{SkiadopoulosK05,Liu+2010}. That is, the consistency decision problem of (possibly incomplete) basic CDC$_d$ networks is also an NP-hard problem. This suggests that the $O(n^5)$ consistency checking algorithm proposed in \cite{SkiadopoulosK05} is incomplete.

The remainder of this paper proceeds as follows. Section~2 introduces the CDC and some basic notions used in this paper. Section~3 shows examples of relations outside the CDC that are definable in the CDC. The main result is proved in Section~4, which is followed by an analysis of the correctness of the $O(n^5)$ algorithm in \cite{SkiadopoulosK05}. The last section concludes the paper.

Table~\ref{tab:notations} summarizes major and special notations used in this paper.
\begin{table}
\centering
\begin{tabular}{cc}
  Notation & Meaning \\
  \hline\hline
  $a,b$ & regions\\
  $I_x(a),I_y(a)$ & the $x$- and $y$-projections of $a$\\
  $\mbr(a)$ & the  minimum bounding rectangle (mbr) of $a$\\
  $\alpha,\beta$ & basic IA relations\\
  $\alpha\otimes\beta$ & a basic RA relation\\
  $\delta_1\!:\!\cdots\!:\!\delta_k$ & a basic CDC relation\\
  $u,v,w$ & spatial variables\\
  $\parallel$ & the right-side parallel relation with gap (\emph{parallel relation} for short)\\
  $\ulcorner$ & the upper left corner (ULC) relation\\
  $\net_{\alpha\otimes\beta}$ & the basic CDC network that entails the RA relation $\alpha\otimes\beta$\\
  $\net_\parallel$ & the basic CDC network that entails $\parallel$\\
  $\net_\ulcorner$ & the basic CDC network that entails $\ulcorner$\\
  $\phi$ & a 3-SAT instance\\
  $p,p_r,p_s,p_t$ & propositional variables\\
  $p_r^\ast,p_s^\ast,p_t^\ast$ & propositional literals\\
  $c,c_j$ & propositional clauses\\
  $\net_p$ & the basic CDC network for propositional variable $p$\\
  $\net_V$ & the basic CDC network for all propositional variables in $V$\\
  $\net_c$ & the basic CDC network for propositional clause $c$\\
  $\net_\phi$ & the basic CDC network for 3-SAT instance $\phi$\\
  $f_p,f_{\neg p},f_p^0$ & the frame spatial variables for propositional variable $p$\\
  $u_p,u_{\neg p}$ & the dual spatial variables for propositional variable $p$\\
  $v_c$ & the spatial variable for propositional clause $c$\\
  $w^c_0,w^c_{rs},w^c_{st},w^c_0$ & the `pier' spatial variables for propositional clause $c$\\
  $u^\ast_r$ & the spatial variable corresponding to $p^\ast_r$\\
  $X_c$ & the spatial variable set $\{w^c_0,u^\ast_r,w^c_{rs},u^\ast_s,w^c_{st},u^\ast_t,w^c_1\}$
\end{tabular}
\caption{Notations.}\label{tab:notations}
\end{table}

\section{Cardinal Direction Calculus: Definitions and Basic Notations}
In this section, we introduce definitions and basic notations of the cardinal direction calculus. We refer the readers to \cite{Liu+2010} for further discussions about this calculus.

The CDC is a calculus defined over connected plane regions. As usual, a region is defined as a nonempty regular closed subset of the plane. We assume all regions, if not stated otherwise, are bounded. We say a region is connected if it has a connected interior. Note a connected region may have disconnected exterior, i.e. it may have holes.

For a bounded set $b$ in the real plane, let
\begin{eqnarray}
x^-(b)=\inf\{x:(x,y)\in b\}, & x^+(b)=\sup\{x:(x,y)\in b\},\\
y^-(b)=\inf\{y:(x,y)\in b\}, & y^+(b)=\sup\{y:(x,y)\in b\}.
\end{eqnarray}
We write
\begin{equation}\label{eq:Ix(b)}
I_x(b)=[x^-(b),x^+(b)],\ \ \ I_y(b)=[y^-(b),y^+(b)].
\end{equation}
Let
\begin{equation}\label{eq:mbr}
\mbr(b)=I_x(b)\times I_y(b).
\end{equation}
We call $\mbr(b)$ the \emph{minimum bounding rectangle} (mbr) of
$b$, and call $I_x(b)$ and $I_y(b)$ {\label{+Ix(a)}} the $x$- and $y$-projection of
$b$, respectively. Clearly, $\mbr(b)$ is the smallest rectangle
which contains $b$ and has sides parallel to the axes.

\begin{figure}[htb]
\centering
\begin{tabular}{cc}
\hspace*{-10mm}
\includegraphics[width=.4\textwidth]{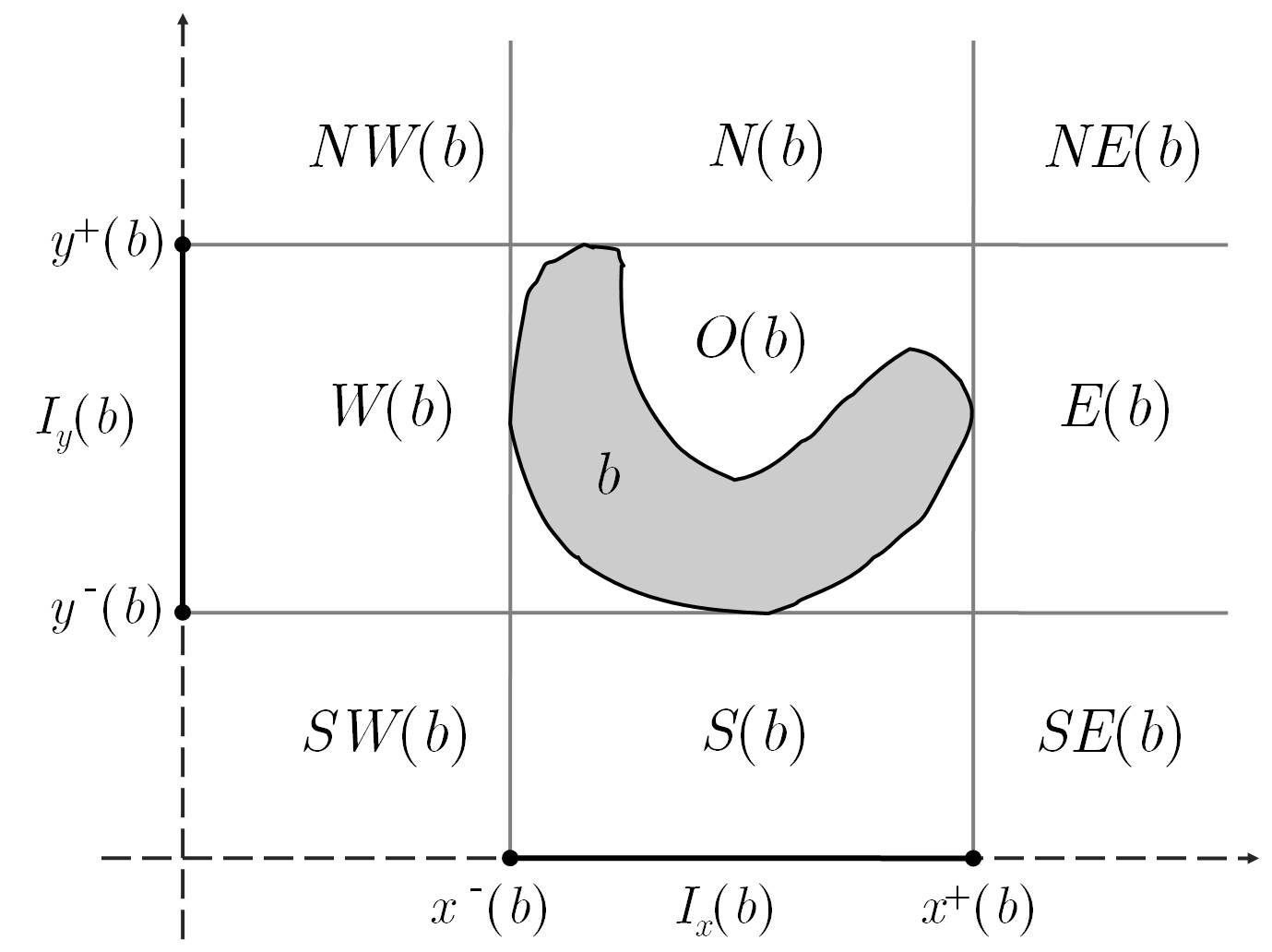}
&
\includegraphics[width=.4\textwidth]{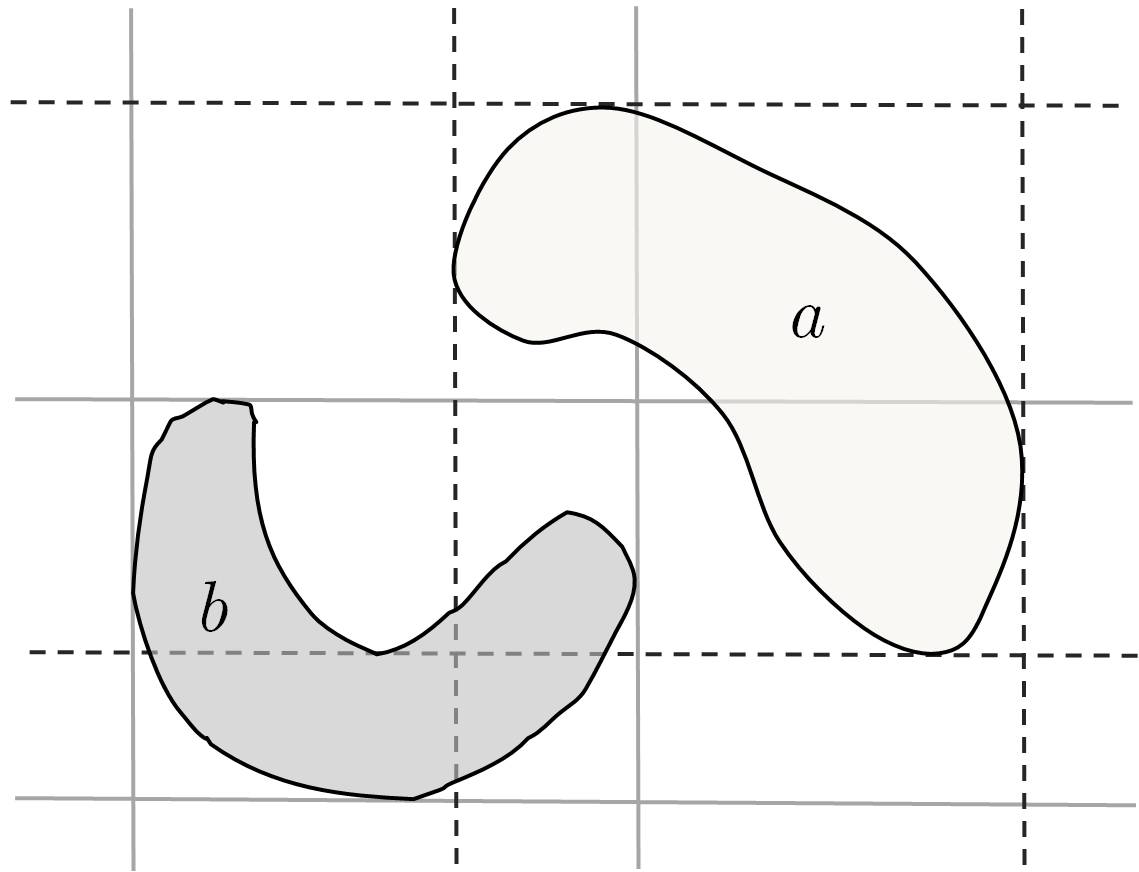}
\\
(a) & (b)
 \end{tabular}
\caption{(a) A bounded connected region $b$ and its 9-tiles; (b) a pair of regions $a,b$.}
\label{fig:9tiles}
\end{figure}

By extending the four edges of $\mbr(b)$, we partition the plane into nine tiles, denoted as $NW(b),N(b),NE(b),W(b),O(b),E(b),SW(b)$,
$S(b),SE(b)$ (see Figure~\ref{fig:9tiles}(a)). Note that each tile is a (bounded or unbounded) connected region,
and the intersection of two tiles is of dimension lower than two.

The notion of direction relation matrix was first proposed
by Goyal and Egenhofer \cite{GoyalE97} for representing the
cardinal direction between extended spatial objects.

\begin{dfn}[direction relation matrix] \label{dfn:cdc}
Suppose $a,b$ are two bounded connected regions. Take $b$ as the
reference object, and $a$ as the primary object. The cardinal direction of
$a$ to $b$ is encoded in a $3\times 3$ Boolean matrix
\begin{equation}\label{eq:dir(a,b)}
\dir(a,b)=\left[
            \begin{array}{ccc}
              d^{NW} & d^N & d^{NE}\\
              d^{W} & d^O  & d^E \\
              d^{SW} & d^S & d^{SE} \\
            \end{array}
          \right],
\end{equation}
where for each tile name $\chi\in \{NW,N,NE,W,O,E,SW,S,SE\}$
\begin{equation}\label{eq:dchi}
d^\chi=1  \Leftrightarrow a^\circ\cap \chi(b)\not=\varnothing,
\end{equation}
where $a^\circ$ is the interior of $a$ and $\chi(b)$ denotes the $\chi$-tile of $b$. The cardinal direction relation of $a$ to $b$ is compactly represented in the form $\delta_1\!:\!\delta_2\!:\!\cdots\!:\!\delta_k$, where $\{\delta_1,\delta_2,\cdots,\delta_k\}$ is the set of tile names $\chi$ such that $d^\chi=1$.
\end{dfn}

Take the two regions $a,b$ in Figure~\ref{fig:9tiles}(b) as example. The cardinal direction relation of $a$ to $b$ is $N\!:\!NE\!:\!E$ and that of $b$ to $a$ is $W\!:\!O\!:\!SW\!:\!S$.

There are altogether 218 cardinal direction relations. Write $\mathcal{B}_{dir}$ for the set of these relations. Then each pair of connected regions are related by one and only one relation in $\mathcal{B}_{dir}$. This means that $\mathcal{B}_{dir}$ is a set of jointly exhaustive and pairwise disjoint (JEPD) relations. The cardinal direction calculus (CDC) is the Boolean algebra generated by $\mathcal{B}_{dir}$. Relations in $\mathcal{B}_{dir}$ are called basic relations of the CDC, and non-basic CDC relations are unions of basic relations. The universal relation, denoted by $\ast$, is in particular the union of all basic CDC relations.

\begin{remark}\label{remark:cdc_d}
Note that in the above definition we assume connected regions. When possibly disconnected regions are used, the calculus is called the cardinal direction calculus for possibly disconnected regions, written as CDC$_d$. There are 511 basic relations in CDC$_d$ \cite{SkiadopoulosK04,Liu+2010}.
\end{remark}

\section{Define Relations outside the CDC}
Non-basic CDC relations are formed by taking unions of basic relations. Relations outside the CDC may be \emph{definable} in the CDC in the following sense.

\begin{dfn}\label{dfn:entailed-relation}
Let $\net$ be a possibly incomplete network of basic CDC constraints over variables $u,v$ and $w_1,\cdots,w_k$. We say a relation $\gamma$ is \emph{entailed} by $\net$, written $\net\models \gamma$, if
\begin{equation}\label{eq:gamma}
\gamma=\{(a,b):(\exists c_1,\cdots,c_k)\ \mbox{s.t.}\ (a,b,c_1,\cdots,c_k)\ \mbox{is\ a\ solution\ of}\ \net\}.
\end{equation}
If $\gamma$ is entailed by some basic CDC network, then we also say $\gamma$ is \emph{definable} in the CDC.
\end{dfn}

An entailed relation is not necessarily a CDC relation.

The polynomial reduction from 3-SAT will rely heavily on one particular entailed relation, namely, the upper left corner (ULC) relation. Before discussing the ULC relation, we first review basic notions of the Rectangle Algebra and give some simple examples of entailed relations.

The Rectangle Algebra (RA) \cite{BalbianiCC99} is a qualitative calculus previously defined on all rectangles sides of which are parallel to the $x$- and $y$-axes. Relations in the RA can be naturally extended to the set of all bounded regions. For two bounded regions $a,b$, the basic RA relation of $a$ to $b$ is written as $\alpha\otimes\beta$, where $\alpha,\beta$ are basic IA relations and $I_x(a)\alpha I_x(b)$ and $I_y(a)\beta I_y(b)$. Table~\ref{tab:int} summarizes notations and definitions of the basic IA relations. For each basic RA relation $\alpha\otimes\beta$, the following equation is clear.
\begin{equation}\label{eq:RA-for-bounded-regions}
(a,b)\in \alpha \otimes \beta  \Leftrightarrow (\mbr(a),\mbr(b))\in \alpha\otimes\beta.
\end{equation}

\begin{table}[htb]\centering
\begin{tabular}{|c|c|c|c|}
  \hline
  Relation & Symbol & Converse & Meaning  \\ \hline
 before & \sfp & \sfpi & $x^-<x^+<y^-<y^+$  \\
 meets & \sfm & \sfmi &  $x^-<x^+=y^-<y^+$ \\
 overlaps & \sfo & \sfoi & $x^-<y^-<x^+<y^+$ \\
 starts & \sfs & \sfsi & $x^-=y^-<x^+<y^+$ \\
 during & \sfd & \sfdi & $y^-<x^-<x^+<y^+$ \\
 finishes & \sff & \sffi & $y^-<x^-<x^+=y^+$  \\
 equals & \sfeq & \sfeq & $x^-=y^-<x^+=y^+$ \\
  \hline
\end{tabular}
\caption{Basic IA relations and their converse, where
$x=[x^-,x^+],y=[y^-,y^+]$ are two intervals.}\label{tab:int}
\end{table}

We next show that some extended basic RA relations can be defined in the CDC.

\begin{example}\label{ex:net_sf}
A proper subset of the extended basic RA relation $\sfs\otimes\sff$ can be entailed by the following basic CDC network
\begin{equation}\label{eq:net_sf}
\net_{\sfs\otimes\sff}=\{u\ O\ v, v\ E\!:\!SE\!:\!S\!:\!O\ u\}.
\end{equation}
It is easy to see that if $(a,b)$ satisfies $\net_{\sfs\otimes\sff}$, then $\mbr(a)$ is contained in, and shares the upper left corner point with, $\mbr(b)$ (cf. Figure~\ref{fig:sf_para}(a)). In terms of the RA language, we have $(\mbr(a),\mbr(b))\in \sfs\otimes\sff$. We stress that $(a,b)$ may be not a solution to $\net_{\sfs\otimes\sff}$ even if $(\mbr(a),\mbr(b))\in \sfs\otimes\sff$. This is because the CDC relation of $b$ to $a$ could be, for example,  $E\!:\!SE\!:\!S$. But when only rectangles are considered, it is straightforward to see that $\sfs\otimes\sff$ is exactly the relation entailed by $\net_{\sfs\otimes\sff}$.

Similarly, we define
\begin{align}\label{eq:net_of}
\net_{\sfo\otimes\sff}&=\{u\ W\!:\!O\ v, v\ E\!:SE\!:\!S\!:\!O\ u\},\\ \label{eq:net_ofi}
\net_{\sfo\otimes\sffi}&=\{u\ S\!:\!SW\!:\!W\!:\!O\ v, v\ E\!:\!O\ u\},\\ \label{eq:net_oeq}
\net_{\sfo\otimes\sfeq}&=\{u\ W\!:\!O\ v, v\ E\!:\!O\ u\}.
\end{align}

\begin{figure}[htbp]
\centering
\begin{tabular}{cc}
  \includegraphics[width=.18\textwidth]{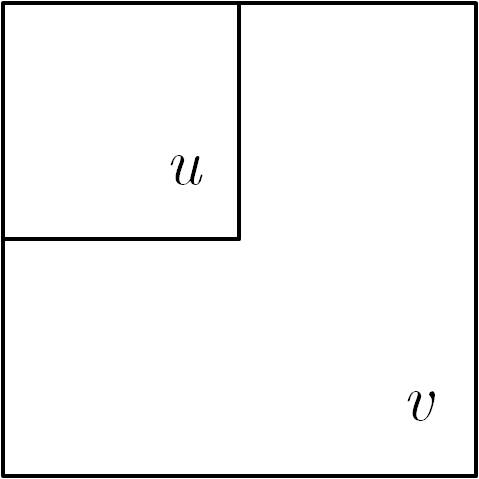} \hspace*{1cm}
  &
  \includegraphics[width=.36\textwidth]{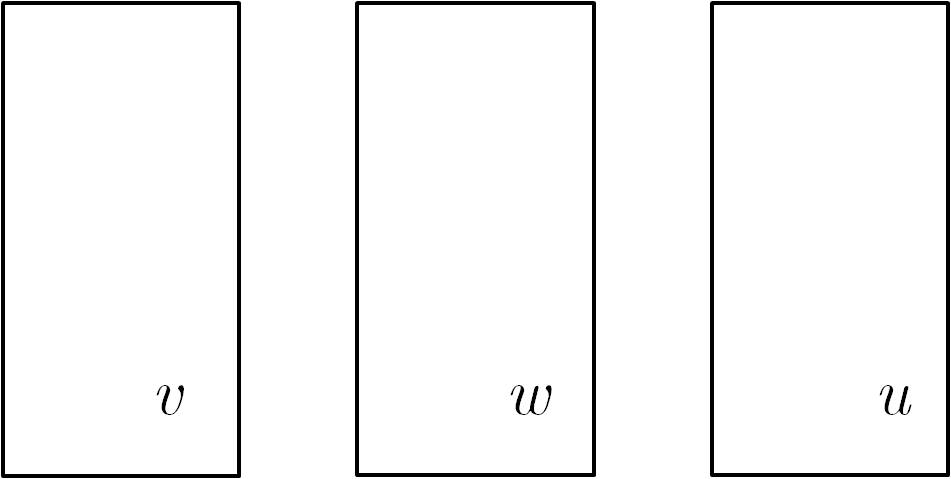}\\
  (a) & (b)
\end{tabular}
\caption{Illustrations of solutions for $\net_{\sfs\otimes\sff(u,v)}$ and $\net_{\parallel(u,v)}$}\label{fig:sf_para}
\end{figure}
\end{example}
We next introduce two entailed relations that involve auxiliary variables.
\begin{example}\label{ex:net_parallel}
We say $a$ is \emph{right-side parallel with gap} (or \emph{parallel} for short) to $b$,
if $I_x(a)\ \sfpi\ I_x(b)$ and $I_y(a)=I_y(b)$, i.e. $a$ is to the  east of $b$ (with gap) and has the same $y$-projection as $b$. This relation is entailed by the following basic CDC network
\begin{equation}\label{eq:net_parallel}
\net_{\parallel}=\{u\ E\ w, w\ E\ v, v\ W\ u\},
\end{equation}
where $w$ is an auxiliary variable (cf. Figure~\ref{fig:sf_para}(b)).
\end{example}

The parallel relation is strictly contained in the single tile relation $E$. Our next example is the upper left corner relation.
\begin{dfn}\label{dfn:upperleftcorner}
Two bounded plane regions $a,b$ are said to have the \emph{upper left corner (ULC) relation}, denoted as $\ulcorner(a,b)$, if the mbrs of $a,b$ are incomparable and have the same upper left corner point, or in the RA language, $(\mbr(a),\mbr(b))$ is an instance of either $\sfs\otimes\sffi$ or its converse $\sfsi\otimes\sff$ (cf. Figure~\ref{fig:ulcillu}).
\end{dfn}
The two possibilities of the ULC relation (cf. Figure \ref{fig:ulcillu}) correspond to the two truth values of a propositional variable. This fact will be exploited in the design of the polynomial reduction from 3-SAT. For convenience, we introduce the following terminologies.
\begin{dfn}\label{dfn:horizontal}
Suppose $\ulcorner(a,b)$. We say $a$ is \emph{horizontal} (\emph{vertical}, resp.) with respect to $b$, or $a$ is \emph{horizontally instantiated} (\emph{vertically instantiated}, resp.), if $\mbr(a)$ is related to $\mbr(b)$ by the RA relation $\sfsi\otimes\sff$($\sfs\otimes\sffi$ , resp.).
\end{dfn}

\begin{figure}\centering
\begin{tabular}{cc}
  \includegraphics[width=.22\textwidth]{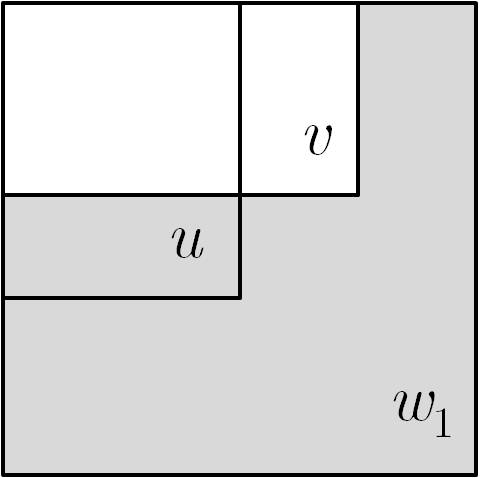} \qquad
  &
  \qquad \includegraphics[width=.22\textwidth]{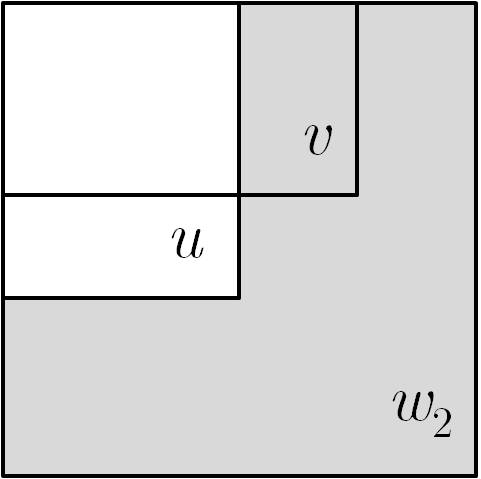}\\
  (a) \qquad & \qquad (b)
\end{tabular}
\caption{Illustrations of constraints in $\net_{\ulcorner}$ }\label{fig:ulc}
\end{figure}

The following proposition shows that the ULC relation can be defined in the CDC.
\begin{prop}
\label{lemma:1}
The ULC relation $\ulcorner$ can be entailed by basic CDC constraints.
\end{prop}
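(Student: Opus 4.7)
The plan is to exhibit an explicit basic CDC network $\net_\ulcorner$ on two distinguished variables $a, b$ plus a small number of auxiliary variables, and then show that the relation entailed on $(a,b)$ is exactly $\ulcorner$. The verification splits into two directions: (i) any solution of $\net_\ulcorner$ makes $(a,b) \in \ulcorner$, read off from the projection semantics of the individual tile constraints; and (ii) any $(a,b) \in \ulcorner$ extends to a solution of $\net_\ulcorner$ by exhibiting suitable connected regions for the auxiliary variables.

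First I would pin down the shared upper-left corner of the mbrs of $a$ and $b$. Example~\ref{ex:net_sf} supplies a ready building block: the subnetwork $\net_{\sfs\otimes\sff}$ forces $(\mbr(u), \mbr(v)) \in \sfs\otimes\sff$, so in particular it forces $u$'s and $v$'s mbrs to share the upper-left corner point. By introducing an auxiliary variable $c$ and imposing two renamed copies of $\net_{\sfs\otimes\sff}$ -- one instantiated with $(u,v) := (c,a)$ and the other with $(u,v) := (c,b)$ -- I simultaneously pin $c$'s upper-left corner to both $a$'s and $b$'s, so that $a$ and $b$ are forced to share their upper-left corner point.

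Second I would rule out the mbr-containment and mbr-equality cases, leaving only $\sfs\otimes\sffi$ and $\sfsi\otimes\sff$. This is the delicate step: $\ulcorner$ is a genuine disjunction of two basic RA relations, and basic CDC constraints cannot directly encode disjunction. My strategy is to introduce further auxiliary variable(s) whose consistent placement is possible in each of the two ULC configurations but impossible when one of $\mbr(a), \mbr(b)$ contains or equals the other. A natural candidate is an auxiliary variable positioned so that it must extend strictly east of one of $a, b$ and strictly south of the other; the asymmetric ``L-shape'' it has to realise is satisfiable precisely when one of $a, b$ is strictly wider while the other is strictly taller, i.e., precisely when the two mbrs are incomparable. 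Combined with the corner-sharing constraints from the previous step, this narrows the entailed relation down to $\sfs\otimes\sffi \cup \sfsi\otimes\sff = \ulcorner$.

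The main obstacle lies in the second step. One must verify that the symmetric design admits an auxiliary placement for both the horizontal and the vertical ULC instantiation (in the sense of Definition~\ref{dfn:horizontal}), while simultaneously excluding every corner-sharing configuration in which one mbr properly contains, or coincides with, the other. Routine bookkeeping on the $x$- and $y$-projections implied by each basic CDC constraint then completes both directions of the entailment, but it is the disjunction-through-auxiliaries trick -- making sure exactly the two desired RA relations are realisable -- that carries the real content of the proof.
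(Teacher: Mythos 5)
Your overall decomposition (pin the shared upper-left corner, then use auxiliaries to exclude mbr containment and equality) matches the shape of the paper's argument, and your first step is fine, though the extra inner variable $c$ is unnecessary: the paper obtains corner-sharing from the very same auxiliaries it uses for the exclusion step. The genuine gap is in your second step, which you yourself identify as carrying the real content but never actually carry out. Your ``natural candidate'' --- a single auxiliary that ``must extend strictly east of one of $a,b$ and strictly south of the other'' --- is not yet a set of basic CDC constraints, and the phrase ``one of $a,b$'' hides precisely the disjunction that basic constraints cannot express. If you fix the choice (say: east of $a$, south of $b$), the gadget is no longer symmetric in $a$ and $b$ and becomes unsatisfiable in one of the two ULC configurations; for instance, with constraints of the type $d\ E\!:\!O\ a$ and $d\ S\!:\!O\ b$, when $a$ is the wide region the set of points lying in $(E\cup O)(a)\cap(S\cup O)(b)$ collapses to the common $O$ region, so $d$ cannot meet $E(a)$ and $S(b)$ at all, and the entailed relation would exclude one half of $\ulcorner$. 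If instead you soften the requirement to ``the auxiliary's interior avoids both mbrs while its own mbr shares the common upper-left corner,'' then an L-shaped region hugging the east and south sides of the larger mbr satisfies it even when one mbr contains the other, so containment is not excluded. In short, the sketched candidate either breaks the symmetry needed to keep both ULC configurations realisable or fails to rule out containment.

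What does work --- and is the paper's construction --- is a pair of auxiliaries $w_1,w_2$, each carrying an \emph{asymmetric pair} of constraints: $u\ O\ w_1$, $w_1\ E\!:\!SE\!:\!S\!:\!O\ u$ together with $v\ O\ w_1$, $w_1\ E\!:\!SE\!:\!S\ v$ (and the same with $u,v$ swapped for $w_2$). The four constraints on $w_1$ already force $\mbr(u)$, $\mbr(v)$, $\mbr(w_1)$ to share their upper-left corner point, and the inclusion of the $O$ tile towards $u$ but its exclusion towards $v$ forces the interior of $w_1$ to meet $\mbr(u)$ while avoiding $\mbr(v)$, hence $\mbr(u)\not\subseteq\mbr(v)$; the variable $w_2$ excludes the other containment. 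Crucially, each $w_i$ is only required to meet one mbr and avoid the other, so it remains realisable (as an L-shaped region) in \emph{both} ULC configurations, which is what your single fixed-orientation auxiliary cannot achieve. Until you exhibit concrete basic constraints with this property and verify both the exclusion and the realisability in the two configurations, the proof is incomplete.
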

\begin{proof}
Two auxiliary variables $w_{1}$ and $w_{2}$ are introduced.
Let
\begin{equation}\label{eq:net_ulc}
\begin{split}
\net_{\ulcorner}=\{&u\ O\ w_{1}, w_1\ E\!:\!SE\!:\!S\!:\!O\ u,v\ O\ w_1,w_1\ E\!:\!SE\!:\!S\ v,\\
& v\ O\ w_2, w_2\ E\!:\!SE\!:\!S\!:\!O\ v,u\ O\ w_2,w_2\ E\!:\!SE\!:\!S\ u\}
\end{split}
\end{equation}
A basic constraint is imposed to each pair of variables in $\{u,v\}\times\{w_1,w_2\}\cup\{w_1,w_2\}\times \{u,v\}$. In particular, the constraints involving $w_1$ are $u\ O\ w_1$, $w_1\ E\!:\!SE\!:\!S\!:\!O\ u$, and $v\ O\ w_1$, $w_1\ E\!:\!SE\!:\!S\ v$. It is clear that these constraints imply $\mbr(u)$ and $\mbr(v)$ have the same upper left corner point as $\mbr(w_1)$ does (see Figure~\ref{fig:ulc}(a) for illustration). Similarly, $\mbr(u)$ and $\mbr(v)$ have the same upper left corner point as $\mbr(w_2)$ does (see Figure~\ref{fig:ulc}(b) for illustration). So the first requirement is satisfied. Furthermore, because $w_1\ E\!:\!SE\!:\!S\!:\!O\ u$ but $w_1\ E\!:\!SE\!:\!S\ v$, we know $\mbr(u)$ is not contained in $\mbr(v)$. Similarly, we have $\mbr(v)$ is not contained in $\mbr(u)$. Therefore, the second requirement is also satisfied.

On the other hand, if $(a,b)$ is an instance of $\ulcorner$, then we can find $c_1,c_2$ such that $\{a,b,c_1,c_2\}$ is a solution of $\net_{\ulcorner}$.
\end{proof}

\section{Consistency Checking of Conjunctive CDC Constraints}

This section proves that consistency checking of possibly incomplete basic CDC network  is an NP-hard problem. We achieve this by reducing the 3-SAT problem to the consistency checking problem $\rsat(\mathcal{B}_{dir}\cup\{\ast\})$. For each 3-SAT instance $\phi$, we construct an incomplete basic CDC network $\net_\phi$ in polynomial time, and show that $\phi$ is satisfiable if and only if $\net_\phi$ is consistent.

In this section, we assume $V=\{p_1,p_2,\cdots,p_n\}$ is a set of propositional variables. Suppose $\phi=c_1\wedge c_2\wedge \cdots \wedge c_m$, where clause $c_j$ is of the form $p_r^\ast\vee p_s^\ast\vee p_t^\ast$ and $p_r^\ast,p_s^\ast,p_t^\ast$ are literals over $V$.
We introduce a basic CDC network $\net_p$ for each propositional variable $p$,  and then introduce a basic CDC network $\net_c$ for each clause $c$. The basic CDC network $\net_\phi$ is defined as the union of all $\net_{c_j}$ ($1\leq j\leq m$). Note when expressing the constraints in $\net_c$, for simplicity, we often use non-CDC constraints which are definable in the CDC. We stress that if such a constraint, e.g. $\sfs\otimes\sff(u,v)$, appears, we always assume that it is replaced by the basic CDC constraints that entail it.

\subsection{CDC Constraints Related to Propositional Variables}

For each propositional variable $p$, we introduce five spatial variables $u_p$, $u_{\neg p}$, $f_p$, $f_{\neg p}$, and $f^0_p$, and define a set $\net_p$ of basic CDC constraints. Shortly we will give examples to show that $\net_p$ is consistent and has a solution in which all the above five spatial variables are rectangles. So in the following informal description, we assume the five variables are all rectangles for simplicity. The network $\net_p$ will ensure the two requirements:
\begin{itemize}
  \item The configuration of $f_p,f_{\neg p},f^0_p$ is as shown in Figure~\ref{fig:var}(a);
  \item Suppose $f_p,f_{\neg p},f^0_p$ are predefined. The configuration of $u_p,u_{\neg p}$ has two possibilities (cf. Figure~\ref{fig:var}(b) and (c)).
\end{itemize}
The second condition is mainly achieved by the fact that $u_p$ is horizontal w.r.t. $f_p$ iff $u_{\neg p}$ is vertical w.r.t. $f_{\neg p}$, which is guaranteed by the following constraints (cf. Figure~\ref{fig:var}(b) and (c)):
\begin{itemize}
  \item $u_p$ is contained in $f_{\neg p}$ and has the ULC relation with $f_p$,
  \item $u_{\neg p}$ contains $f_p$, and is contained in $f_p^0$, and has the ULC relation with $f_{\neg p}$,
  \item $u_p$ and $u_{\neg p}$ have the ULC relation.
\end{itemize}

\begin{figure}[htbp]
\centering
\begin{tabular}{ccc}
  \includegraphics[width=.25\textwidth]{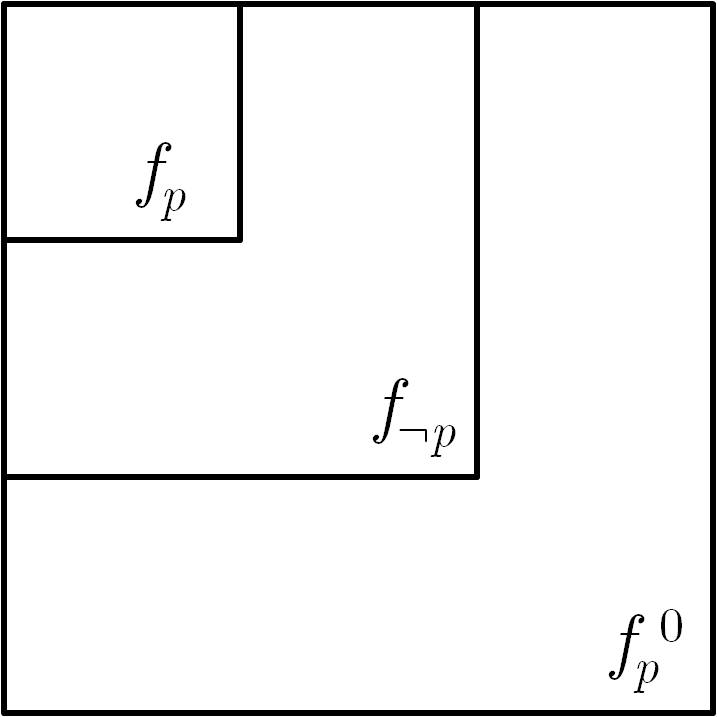} &
  \includegraphics[width=.25\textwidth]{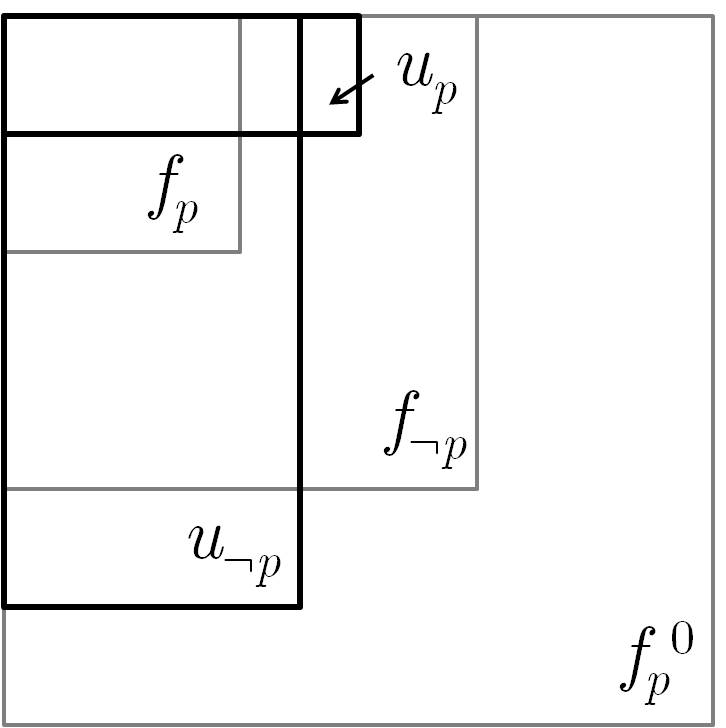} &
  \includegraphics[width=.25\textwidth]{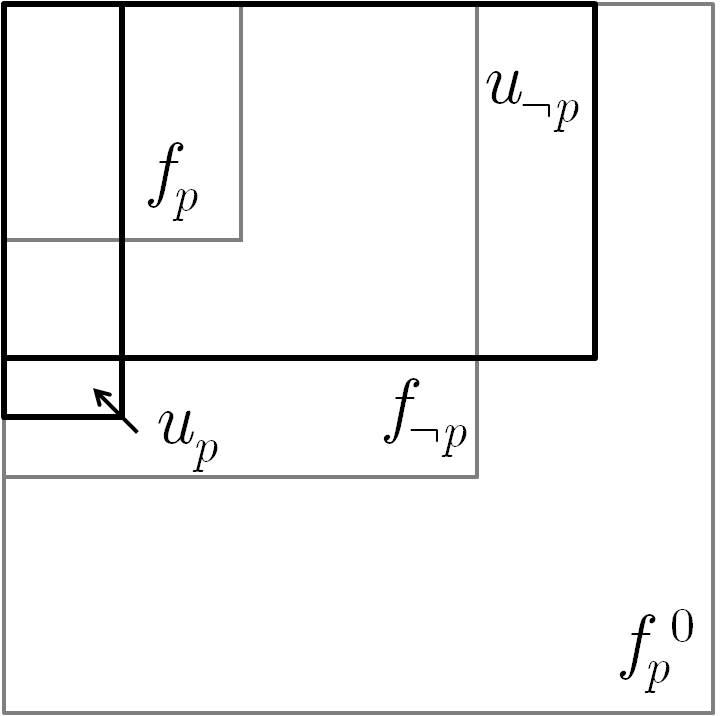} \\
(a) & (b) & (c)
\end{tabular}
\caption{Illustrations of spatial variables in $\{f_p,f_{\neg p},f_p^0, u_p,u_{\neg p}\}$: (a) the frame spatial variables $f_p,f_{\neg p},f_p^0$;
(b) a solution of $\net_p$ where $u_p$ is horizontally instantiated; (c) a solution of $\net_p$ where $u_p$ is vertically instantiated.}
\label{fig:var}
\end{figure}

The following definition specifies constraints in $\net_p$ formally.

\begin{dfn}\label{dfn:net-p}
Let $p$ be a propositional variable, and $u_p,u_{\neg p},f_p,f_{\neg p},f_p^0$ be five spatial variables. The basic CDC network $\net_p$ contains the basic CDC constraints $u_p\ O\ f_{\neg p}$ and $f_p\ O\ u_{\neg p}$, and the following non-CDC constraints
\begin{equation}\label{eq:other-constraints-in-net-p}
\ulcorner(u_p,f_p), \ulcorner(u_{\neg p},f_{\neg p}), \ulcorner(u_p,u_{\neg p}), \sfs\otimes\sff(f_p,f_{\neg p}), \sfs\otimes\sff(u_{\neg p},f_p^0), \sfs\otimes\sff(f_{\neg p},f_p^0),
\end{equation}
where $\ulcorner$ is defined in Eq.~\ref{eq:net_ulc} and $\sfs\otimes\sff$ is a basic RA relation. The non-CDC constraints appeared in $\net_p$ are replaced by the basic CDC constraints that entail them. We call $u_p$ and $u_{\neg p}$  the \emph{dual} spatial variables of $p$, and call $f_p$, $f_{\neg p}$, and $f^0_p$ the \emph{frame spatial variables} of $u_p$.
\end{dfn}

We note that except $u_p,u_{\neg p},f_p,f_{\neg p},f_p^0$, $\net_p$ also involves six other auxiliary spatial variables, which are introduced by the three ULC constraints.

Two solutions of $\net_p$ are shown in Figure~\ref{fig:var}, where $u_p$ is horizontally  instantiated in the solution shown in Figure~\ref{fig:var}(b), but vertically instantiated in the solution shown in Figure~\ref{fig:var}(c).
Though its position is non-determined, we know that the lower right corner of $\mbr(u_p)$ is in the interior of the shaded upper left (lower right, resp.) sub-rectangle if $u_p$ is horizontal (vertical, resp.) w.r.t. $f_p$ (see Figure~\ref{fig:var2}(a)). We next show, in any solution of $\net_p$, $u_p$ is vertical w.r.t. $f_p$ if and only if $u_{\neg p}$ is horizontal w.r.t. $f_{\neg p}$. Moreover, we show $\net_p$ has one solution in which $u_p$ is horizontal w.r.t. $f_p$  and another solution in which $u_p$ is vertical w.r.t. $f_p$.

\begin{figure}\centering
\begin{tabular}{ccc}
\includegraphics[width=.25\textwidth]{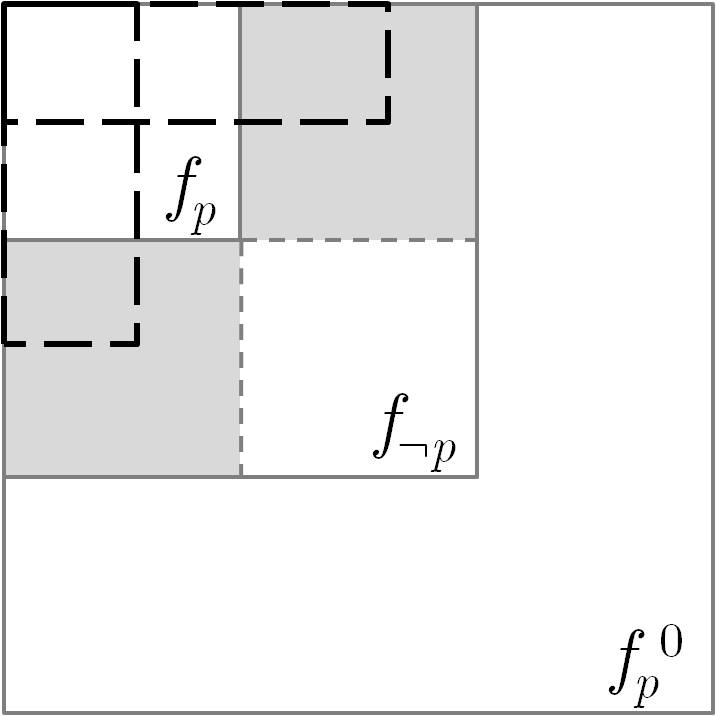} &
\includegraphics[width=.25\textwidth]{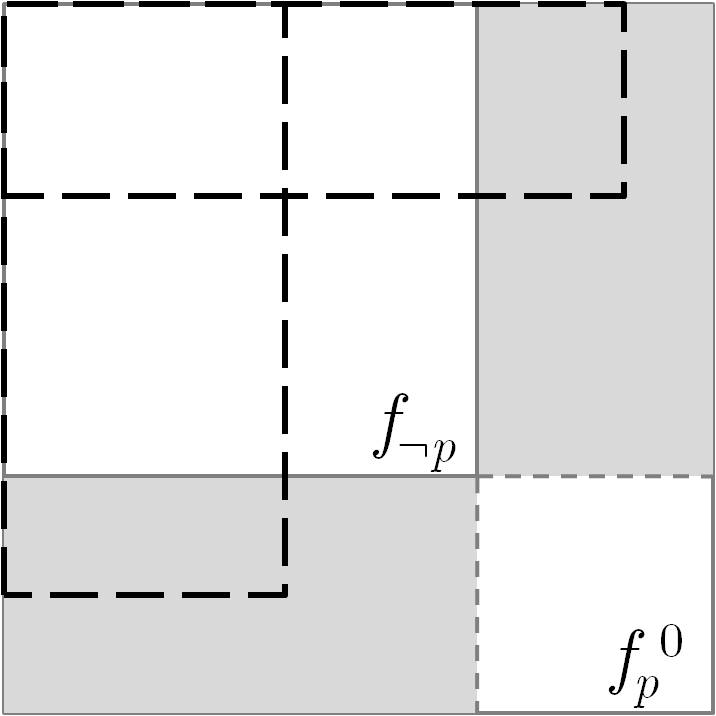} &
\includegraphics[width=.25\textwidth]{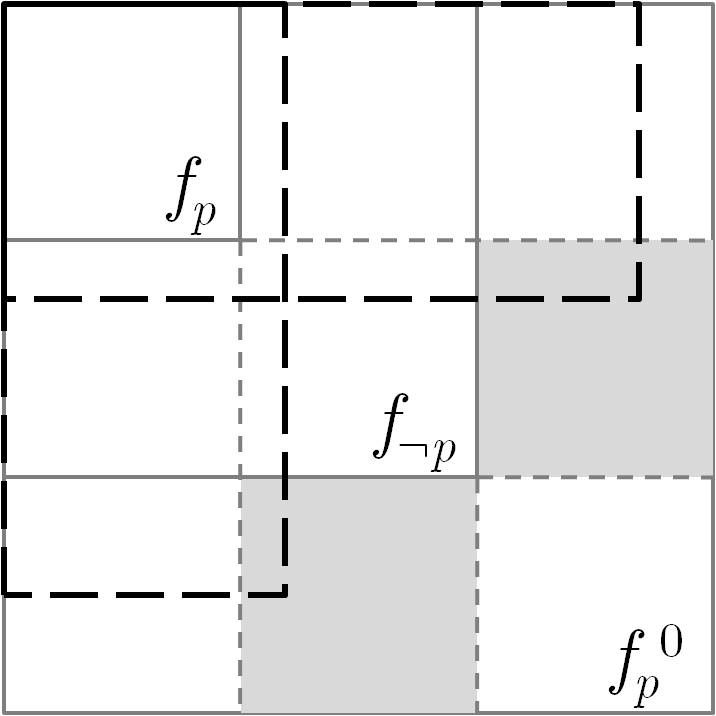} \\
(a) & (b) & (c)
\end{tabular}
\caption{Possible positions for the lower right corner points of $u_p$ (a) and $u_{\neg p}$ (b) (c)}
\label{fig:var2}
\end{figure}

\begin{prop}\label{lemma:2}
Let $\net_p$ be the basic CDC network of a propositional variable $p$. Suppose $\{u_p,u_{\neg p}, f_p, f_{\neg p}, f^0_p\}$ is a solution of $\net_p$. Then
$u_p$ is vertical w.r.t $f_p$ iff $u_{\neg p}$ is horizontal w.r.t. $f_{\neg p}$.
\end{prop}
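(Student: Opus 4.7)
The plan is to translate each constraint in $\net_p$ into a statement about minimum bounding rectangles, and then to rule out the two ``same-orientation'' configurations for $(u_p,u_{\neg p})$ using the incomparability requirement built into $\ulcorner(u_p,u_{\neg p})$. Since Definition~\ref{dfn:horizontal} partitions each of $\ulcorner(u_p,f_p)$ and $\ulcorner(u_{\neg p},f_{\neg p})$ into exactly two sub-cases (horizontal or vertical), there are four combinations to consider in total, and eliminating the (horizontal, horizontal) and (vertical, vertical) cases immediately yields the iff.

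First I would collect the geometric consequences of $\net_p$. The two $\sfs\otimes\sff$ constraints on the frame give $\mbr(f_p)\subsetneq\mbr(f_{\neg p})\subsetneq\mbr(f_p^0)$ with all three rectangles sharing a common upper-left corner $P$, and the two ULC constraints $\ulcorner(u_p,f_p)$, $\ulcorner(u_{\neg p},f_{\neg p})$ force $\mbr(u_p)$ and $\mbr(u_{\neg p})$ to share the same corner $P$. The single-tile constraints $u_p\ O\ f_{\neg p}$ and $f_p\ O\ u_{\neg p}$ translate into the mbr inclusions $\mbr(u_p)\subseteq\mbr(f_{\neg p})$ and $\mbr(f_p)\subseteq\mbr(u_{\neg p})$, while $\sfs\otimes\sff(u_{\neg p},f_p^0)$ gives $\mbr(u_{\neg p})\subsetneq\mbr(f_p^0)$.

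Next comes the case analysis. In the (horizontal, horizontal) case, $u_p$ is strictly wider and strictly shorter in $y$ than $f_p$, while $u_{\neg p}$ is strictly wider and strictly shorter in $y$ than $f_{\neg p}$. Combining $\mbr(u_p)\subseteq\mbr(f_{\neg p})$ with $x^+(f_{\neg p})<x^+(u_{\neg p})$ yields $x^+(u_p)\le x^+(f_{\neg p})<x^+(u_{\neg p})$, and combining $\mbr(f_p)\subseteq\mbr(u_{\neg p})$ with $y^-(f_p)<y^-(u_p)$ yields $y^-(u_{\neg p})\le y^-(f_p)<y^-(u_p)$. Together with the shared corner $P$, this gives $\mbr(u_p)\subsetneq\mbr(u_{\neg p})$, contradicting the incomparability part of $\ulcorner(u_p,u_{\neg p})$. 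The (vertical, vertical) case is symmetric: $u_p$ narrower than $f_p$ combined with $\mbr(f_p)\subseteq\mbr(u_{\neg p})$ gives $x^+(u_p)<x^+(f_p)\le x^+(u_{\neg p})$, and $\mbr(u_p)\subseteq\mbr(f_{\neg p})$ combined with $u_{\neg p}$ taller than $f_{\neg p}$ gives $y^-(u_p)\ge y^-(f_{\neg p})>y^-(u_{\neg p})$, again forcing $\mbr(u_p)\subsetneq\mbr(u_{\neg p})$ and contradicting ULC.

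The main obstacle I expect is the careful bookkeeping of strict versus non-strict inequalities along the two chains $\mbr(u_p)\subseteq\mbr(f_{\neg p})$ and $\mbr(f_p)\subseteq\mbr(u_{\neg p})$: the final $\mbr(u_p)\subsetneq\mbr(u_{\neg p})$ is strict only because at each coordinate the strict inequality is supplied by the horizontal/vertical shape assumption at exactly the right end. Beyond that, the argument is a direct case check, and no auxiliary geometric construction is required.
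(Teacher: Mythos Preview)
Your proposal is correct and follows essentially the same approach as the paper: both arguments observe that all five mbrs share the common upper-left corner, then show that if $u_p$ and $u_{\neg p}$ are in the same orientation the chain of inclusions forces $\mbr(u_p)\subsetneq\mbr(u_{\neg p})$, contradicting $\ulcorner(u_p,u_{\neg p})$. The paper phrases this geometrically in terms of the admissible region for the lower-right corner points (referring to Figure~\ref{fig:var2}), whereas you spell out the coordinate inequalities directly, but the content is the same.
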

\begin{proof}
It is clear that the mbrs of $u_p,u_{\neg p},f_p,f_{\neg p},f^0_p$ have the same upper left corner point.
We now consider the lower right corner points of $\mbr(u_p)$ and $\mbr(u_{\neg p})$. The constraints $\ulcorner(u_p,f_p)$ and $u_p\ O\ f_{\neg p}$ restrict the lower right corner point of $\mbr(u_p)$ to the shaded part in Figure~\ref{fig:var2}(a). Similarly, the constraints $\ulcorner(u_{\neg p},f_{\neg p})$ and $u_{\neg p}\ O\ f^0_p$ restrict the lower right corner point of $\mbr(u_{\neg p})$ to the shaded part in Figure~\ref{fig:var2}(b). By adding constraint $f_p\ O\ u_{\neg p}$, the possible area of the lower right corner point of $\mbr(u_{\neg p})$ is further restricted to the shaded part in Figure~\ref{fig:var2}(c).

Therefore, if $u_p$ and $u_{\neg p}$ are both vertical or both horizontal, then the lower right corner point of $\mbr(u_p)$ must be in $\mbr(u_{\neg p})$, which implies $\mbr(u_p)\subset\mbr(u_{\neg p})$ as their upper left corner points are the same. By imposing $\ulcorner(u_p, u_{\neg p})$ (which needs two extra auxiliary variables), $\mbr(u_p)$ and $\mbr(u_{\neg p})$ are necessarily partially overlapping, i.e., they can not be vertical or horizontal at the same time.
\end{proof}

The mutual exclusion of vertically and horizontally instantiations of dual variables $u_p,u_{\neg p}$ corresponds to the mutual exclusion of the truth values of  $p$ and its negation $\neg p$.

For each propositional variable $p_i\in V=\{p_1,p_2,\cdots,p_n\}$, we introduce a pair of dual spatial variables
\begin{equation}
u_i\equiv u_{p_i}\ \mbox{and}\ u_{\neg i}\equiv u_{\neg p_i},
\end{equation}
and three frame spatial variables
\begin{equation}
f_i\equiv f_{p_i}, f_{\neg i}\equiv f_{\neg p_i}\ \mbox{and}\ f^0_i\equiv f^0_{p_i},
\end{equation}
and construct, as described above in Definition~\ref{dfn:net-p}, a basic CDC network $\net_{p_i}$ over spatial variables $\{u_i,u_{\neg i},f_i,f_{\neg i},f_i^0\}$.

In order to fix the relative direction between two frame spatial variables of different propositional variables (cf. Figure~\ref{fig:frame}), we introduce a set of reference spatial variables. Precisely, let
\begin{eqnarray}
\label{eq:Vref}
V_{ref}=\{w_{ref},f_{ref},f_{\neg ref},f_{ref}^0\},\\
\label{eq:Nref}
\begin{split}
\net_{ref}=\{w_{ref}\ O\ f_{ref}\ O\ f_{\neg ref}\ O\ f_{ref}^0, \\
f_{ref}^0\ S\!:\!O\ f_{\neg ref}\ S\!:\!O\ f_{ref}\ S\!:\!O\ w_{ref}\},
\end{split}
\end{eqnarray}
where the shorthand, say, $x\ S\!:\!O\ y\ S\!:\!O\ z$ denotes that $x\ S\!:\!O\ y$ and $y\ S\!:\!O\ z$. Furthermore, we require
\begin{eqnarray}
\label{eq:f1-ref}
&&\parallel(f_1,f_{ref}),\  \parallel(f_{\neg 1},\  f_{\neg ref}),\ \parallel(f_1^0,f_{ref}^0),\\
\label{eq:parallel-constraints}
&&\parallel(f_{i+1},f_{i}),\ \parallel(f_{\neg(i+1)},f_{\neg i}),\ \parallel(f_{i+1}^0,f_{i}^0,), \hspace*{10mm} (1\leq i< n)
\end{eqnarray}
where the relation $\parallel$ is defined in Eq.~\ref{eq:net_parallel}.
Note these $3n$ parallel constraints introduce $3n$ new auxiliary variables.

\begin{figure}[htbp]
\centering
  \includegraphics[width=.9\textwidth]{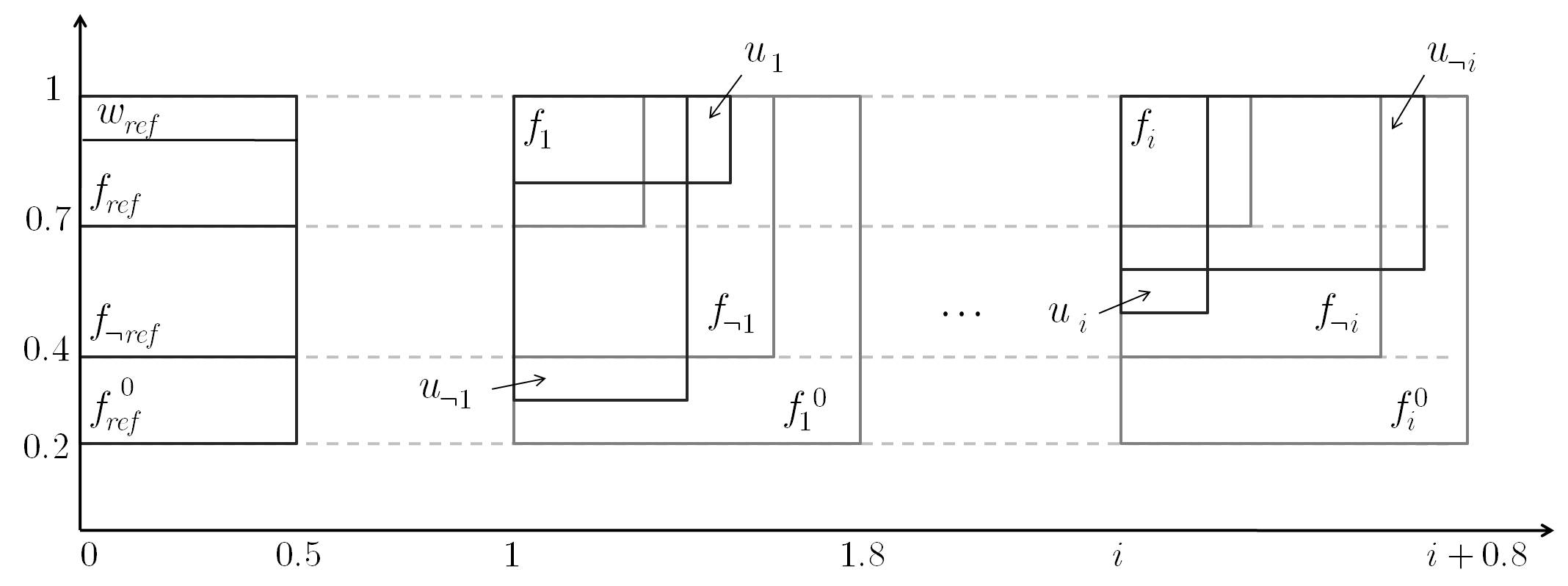}\\
  \caption{Illustration of a solution of $\net_V$}\label{fig:frame}
\end{figure}

The reference variable $w_{ref}$ in $V_{ref}$ will be used in the next subsection when constructing the basic CDC networks for propositional clauses.

\begin{dfn}\label{dfn:net_frame}
We write $\net_V$ for the set of basic CDC constraints that includes those in $\net_{ref}$ and  $\net_p$ for each $p\in V$, and those basic CDC constraints that entail the parallel relations specified in Eq.s~\ref{eq:f1-ref} and ~\ref{eq:parallel-constraints}.
\end{dfn}

\begin{example}\label{ex:solution_of_reference_objects}
A solution of $\net_V$ is constructed as follows (see Figure~\ref{fig:frame} for an illustration).
\begin{eqnarray}\label{eq:w-ref}
w_{ref} & = & [0,0.5]\times [0.9,1],\\ \label{eq:f-ref}
f_{ref}      & = & [0,0.5]\times [0.7,1],\\ \label{eq:fneg-ref}
f_{\neg ref} &=& [0,0.5]\times [0.4,1],\\ \label{eq:f0-ref}
f_{ref}^0  &=& [0,0.5]\times [0.2,1].
\end{eqnarray}
For each propositional variable $p_i$, we define $f_i,f_{\neg i}$, and $f_i^0$ as follows.
\begin{eqnarray}
      f_i &=& [i,i+0.3]\times[0.7,1]\\
      f_{\neg i} &=& [i,i+0.6]\times[0.4,1]\\
      f_i^0 &=& [i,i+0.8]\times[0.2,1]
\end{eqnarray}

The network $\net_V$ does not impose new constraints to $u_i$ and $u_{\neg i}$. Therefore, we can lay $u_i$ horizontally and $u_{\neg i}$ vertically, or vice versa. For example, we may define (see $u_1$ and $u_{\neg 1}$ in Figure~\ref{fig:frame} for illustration).
\begin{equation}
u_i=[i,i+0.5]\times[0.8,1],\quad u_{\neg i}=[i,i+0.4]\times[0.3,1],
\end{equation}
or vise versa (see $u_i$ and $u_{\neg i}$ in Figure~\ref{fig:frame} for illustration),
\begin{equation}
u_i=[i,i+0.2]\times[0.5,1],\quad u_{\neg i}=[i,i+0.7]\times[0.6,1].
\end{equation}
\end{example}

\subsection{CDC Constraints Related to Clauses}

In the above subsection, we have set up the correspondence between the truth value (\true/\false) of a propositional variable $p$ and the vertical/horizontal state of the corresponding spatial variable $u_p$. This subsection introduces for each clause $c\equiv p_r^\ast \vee p_s^\ast \vee p_t^\ast$ ($1\leq r<s<t\leq n$) a basic CDC constraint network $\net_c$. For each truth assignment $\pi:V\rightarrow \{\true,\false\}$, we prove that ``$\pi$ satisfies $c$" is equivalent to that ``$\net_c$ has a solution in which $u_i$ is vertically instantiated if and only if $\pi(p_i)=\true$ for $i=r,s,t$". Write $u_i^\ast$ for the spatial variable that corresponds to $p_i^\ast$, i.e.
\begin{equation} \label{eq:u_i^star}
u^\ast_i = \left\{
   \begin{array}{l l}
\mbox{$u_i$}, & \mbox{if $p_i^\ast=p_i$,}\\
\mbox{$u_{\neg i}$}, & \mbox{if $p_i^\ast=\neg p_i$.}\\
  \end{array}
  \right.
 \end{equation}
Assume, moreover, a vertical/horizontal state $state_i$ takes value in $\{$vertical, horizontal$\}$. The above equivalence statement means that, for each 3-tuple $(state_r,state_s,state_t)$ of vertical/horizontal states,  $\net_c$ has a solution in which $u_i^\ast$ is in $state_i$ for $i=r,s,t$ if and only if $state_r,state_s,state_t$ are not all horizontal.

\begin{figure}[htbp]
\centering
  \includegraphics[width=.8\textwidth]{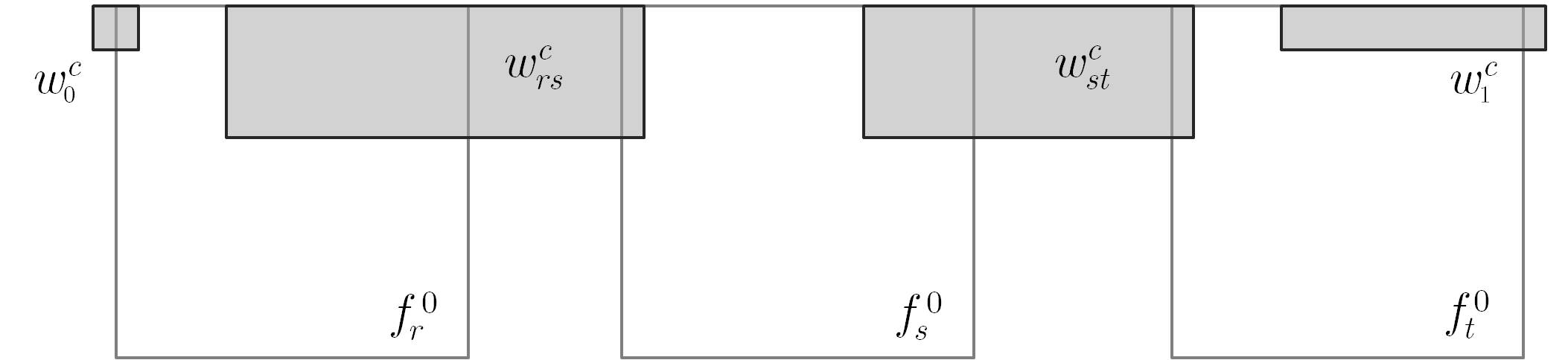}\\
  \caption{Positions of $w^c_0,w^c_{rs},w^c_{st},w^c_1$.}\label{fig:w1}
\end{figure}

We next give an intuitive explanation for the construction of constraints in $\net_c$. Consider the frames of $p_r$, $p_s$, and $p_t$. We introduce four auxiliary spatial variables $w_0^c,w_{rs}^c,w_{st}^c$ and $w_1^c$ such that they are bridged by $f_r^0$, $f_s^0$, and $f_t^0$ in the sense that their $x$-projections are overlapped one by one in the ordering
$w_0^c,f_r^0,w_{rs}^c,f_s^0,w_{st}^c,f_t^0,w^c_1$ (see Figure~\ref{fig:w1}). The spatial variables $u_i^\ast$ ($i=r,s,t$) may be either horizontally or vertically instantiated. To exclude the case where  $u_r^\ast$, $u_s^\ast$, and $u_t^\ast$ are all horizontally instantiated, we introduce a new spatial variable $v_c$ and several new constraints. Intuitively, the mbr of $v_c$ has the form as shown in Figure~\ref{fig:w3}, but the interior of $v_c$ is disjoint from spatial variables in
\begin{equation}\label{eq:Xc}
X_c \equiv \{w_0^c,u_r^\ast,w_{rs}^c,u_s^\ast,w_{st}^c,u_t^\ast, w_1^c\}.
\end{equation}
Note this is possible only if  $u_r^\ast$, $u_s^\ast$, and $u_t^\ast$ do not bridge all the gaps between $w_0^c,w_{rs}^c,w_{st}^c$ and $w_1^c$. In what follows, we refer to this as the \emph{gap condition}.

The gap condition is fulfilled by imposing the following constraints:
\begin{align}
\label{eq:gap-condition-1}
& x\ O \ v_c && && (x\in X_c),\\
\label{eq:gap-condition-2}
& v_c \ E\!:\!SE\!:\!S\ w_0^c, && v_c\ S\!:\!SW\!:\!W\ w_1^c\\
\label{eq:gap-condition-3}
& v_c\ E\!:\!SE\!:\!S\!:SW\!:\!W\ x && && (x\in\{u_r^\ast,w_{rs}^c,u_s^\ast,w_{st}^c,u_t^\ast\}).
\end{align}

Note that the constraint of $v_c$ to a spatial variable $x$ in $X_c$ does not contain tile name $O$. This means that the interior of $v_c$ is disjoint from $x$. From constraints $w_0^c\ O\ v_c$ and $v_c\ E\!:\!SE\!:\!S\ w_0^c$, we know $\mbr(v_c)$ has the same upper left corner point as $\mbr(w_0^c)$ does. Similarly, $\mbr(v_c)$ has the same upper right corner point as $\mbr(w_1^c)$ does.

Assume the gap condition is violated. This means, there is no gap between any two consecutive regions in $X_c$ (cf. Figure~\ref{fig:w2}). In this case, the union of these regions contains the  rectangle $I_x(v_c)\times I_y(w_0^c)$, which should be excluded from the interior of $v_c$. This contradicts the requirement that $\mbr(v_c)$ shares the same upper left corner point with $w_0^c$. Therefore, the constraints are not satisfiable.

\begin{figure}[htbp]
\centering
  \includegraphics[width=.8\textwidth]{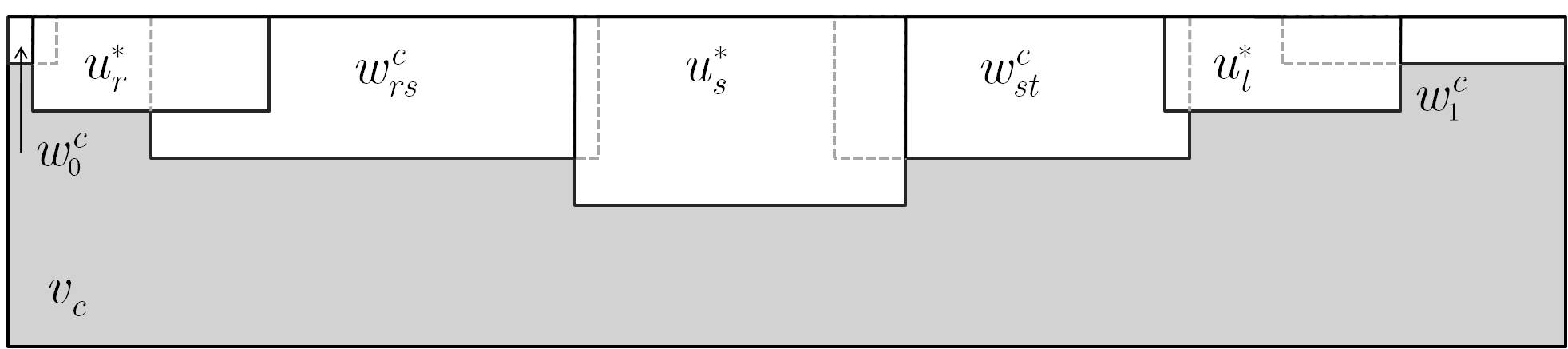}\\
  \caption{Illustration of the case that the gap condition is violated.}\label{fig:w2}
\end{figure}

On the other hand, suppose there is a gap between two consecutive regions in $X_c$ (cf. Figure~\ref{fig:w3}). It is straightforward to check that all the constraints are satisfied if we let $v_c$ be the region obtained from (after necessary regularization) subtracting regions in $X_c$ from the rectangle $\mbr(v_c)$.

\begin{figure}[htbp]
\centering
  \includegraphics[width=.8\textwidth]{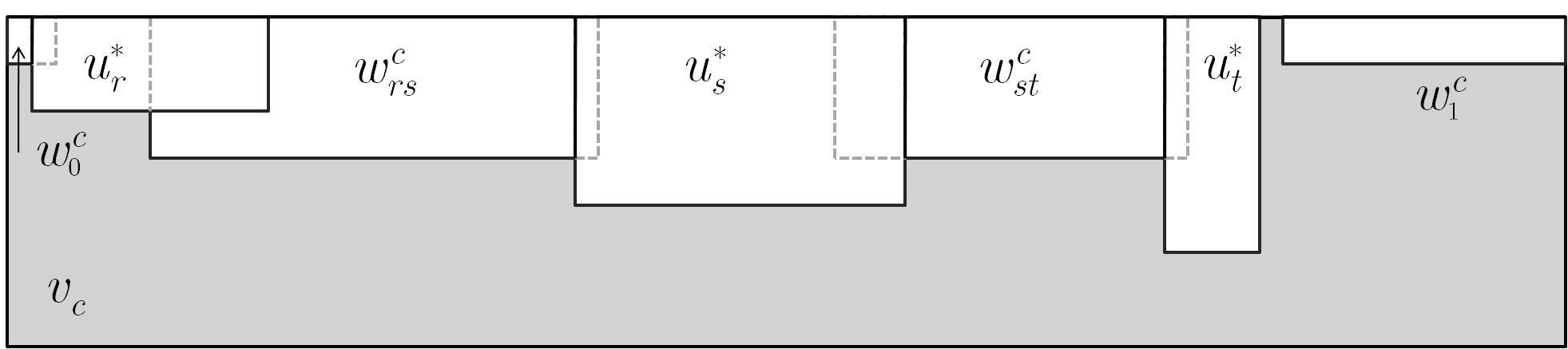}\\
  \caption{Illustration of $v_c$ when the gap condition is satisfied.}\label{fig:w3}
\end{figure}

After an intuitive description, we next introduce the basic CDC constraints in $\net_c$ for clause $c\equiv p^\ast_r\vee p^\ast_s\vee p^\ast_t$ in $\phi$.

We begin with the constraints involving the four auxiliary `pier' spatial variables $w_0^c,w_{rs}^c,w_{st}^c$ and $w_1^c$. As shown in Figure~\ref{fig:w1}, these variables are interpreted as rectangles that are bridged by the frames $f_r^0$, $f_s^0$, and $f_t^0$.

\begin{eqnarray}\label{eq:wc-1}
&&\parallel (w^c_0,w_{ref}), \parallel (w^c_1,w_{ref}),\\
\label{eq:wc0}
&&\sfo\otimes\sff(w^c_0,f_r),\\
\label{eq:w^r_c}
&&\left\{
\begin{array}{l l}
  \sfo\otimes\sfeq(f_r,w^c_{rs}), \sfo\otimes\sfeq(w^c_{rs}, f_s), & \mbox{if $p_r^\ast=p_r$,}\\
  \sfo\otimes\sffi(f_{\neg r}, w^c_{rs}), \sfo\otimes\sfeq(w^c_{rs},f_s), & \mbox{if $p_r^\ast=\neg p_r$,}
\end{array}
\right. \\
\label{eq:w^s_c}
&&\left\{
\begin{array}{l l}
  \sfo\otimes\sfeq(f_s,w^c_{st}), \sfo\otimes\sfeq(w^c_{st},f_t), & \mbox{if $p_s^\ast=p_s$,}\\
  \sfo\otimes\sffi(f_{\neg s}, w^c_{st}), \sfo\otimes\sfeq(w^c_{st},f_t), & \mbox{if $p_s^\ast=\neg p_s$,}
\end{array}
  \right. \\
\label{eq:w^t_c}
&&\left\{
\begin{array}{l l}
  \sfo\otimes\sffi(f_t,w^c_1), & \mbox{if $p_t^\ast=p_t$,}\\
  \sfo\otimes\sffi(f_{\neg t}, w^c_1), & \mbox{if $p_t^\ast=\neg p_t$.}
\end{array}
\right.
\end{eqnarray}
Note the $\parallel$ constraint and RA constraints $\sfo\otimes\sff$, $\sfo\otimes\sfeq$, $\sfo\otimes\sffi$ in the above equations are shorthands of the basic CDC constraints that entail them (cf. Example~\ref{ex:net_sf}). The first equation (Eq.~\ref{eq:wc-1}) requires that $w^c_0$ and $w^c_1$ are of the same height as the reference spatial variable $w_{ref}$. The second equation (Eq.~\ref{eq:wc0}) specifies the RA relation between $w^c_0$ and $f_r$. The third equation (Eq.~\ref{eq:w^r_c}) specifies that $w_{rs}^c$ is of the same height as the inner frame $f_i$ of $u_i$. The position of $w_{rs}^c$, however, depends on the sign of literal $p_r^\ast$. If $p_r^\ast$ is positive, then we require $w_{rs}^c$ to bridge the gap between $f_r$ and $f_s$; otherwise, we require $w_{rs}^c$ to bridge the gap between $f_{\neg r}$ and $f_s$. The constraints involving $w_{st}^c$, specified in Eq.~\ref{eq:w^s_c}, are similar. The last equation (Eq.~\ref{eq:w^t_c}) specifies that $f_t$ overlaps $w_1^c$ if $p_t^\ast$ is positive, and $f_{\neg t}$ overlaps $w_1^c$ otherwise.

We illustrate the construction of the above constraints with an example.
\begin{example}\label{ex:net-c}
Consider the clause $c=p_r\vee\neg p_s\vee p_t$. Figure~\ref{fig:cl3} illustrates the configuration of variables $w^c_0,w^c_{rs},w^c_{st},w^c_1$ in the frames of $u_r,u_s$ and $u_t$.
The network $\net_c$ specifies that $\sfo\otimes\sff (w^c_0,f_r)$; $\sfo\otimes\sfeq(f_r, w^c_{rs}), \sfo\otimes\sfeq(w^c_{rs},f_s)$; $\sfo\otimes\sffi(f_{\neg s},w^c_{st})$, $\sfo\otimes\sfeq(w^c_{st},f_t)$; and $\sfo\otimes\sffi (f_t,w^c_1)$.

\begin{figure}[htb]
\centering
\includegraphics[width=.8\textwidth]{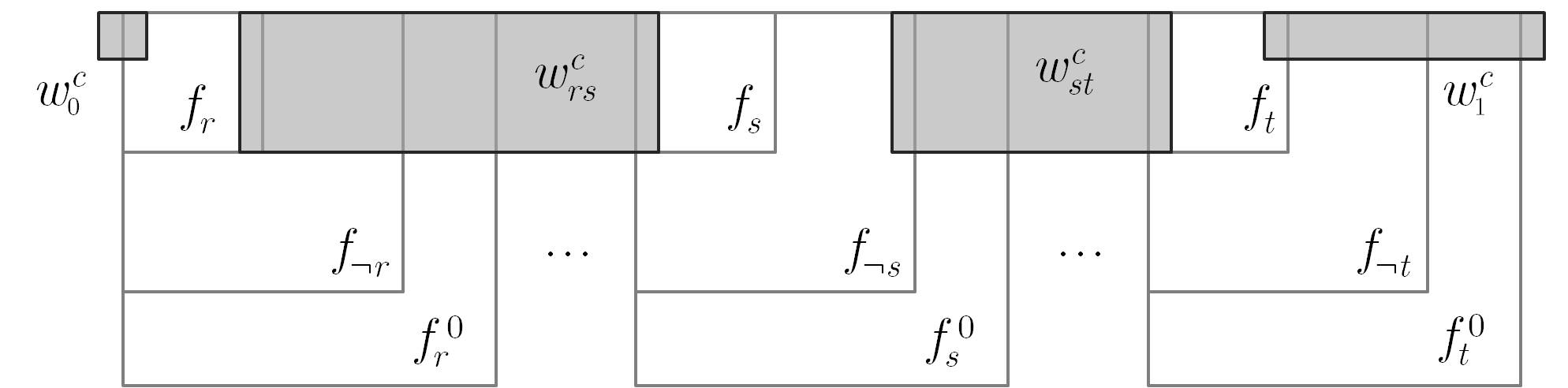}
\caption{Configurations of $w^c_0,w^c_{rs},w^c_{st},w^c_1$ for clause $c=p_r\vee\neg p_s\vee p_t$.}
\label{fig:cl3}
\end{figure}

Figure~\ref{fig:cl2} examines the possible position of $u_r^\ast=u_r$ and $u_s^\ast=u_{\neg s}$.
\begin{figure}[htb]
\centering
\begin{tabular}{cc}
\includegraphics[width=.4\textwidth]{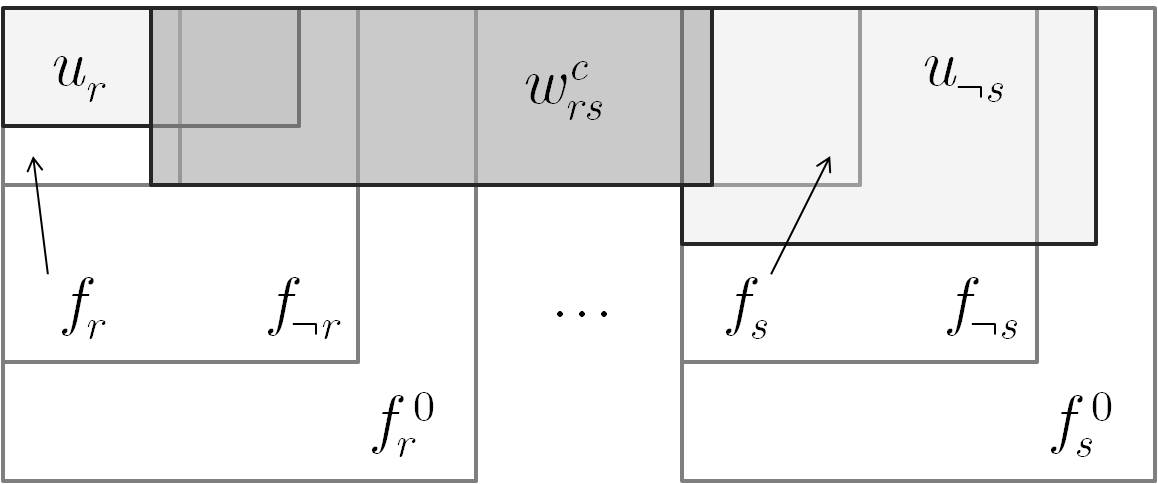}
&
\includegraphics[width=.4\textwidth]{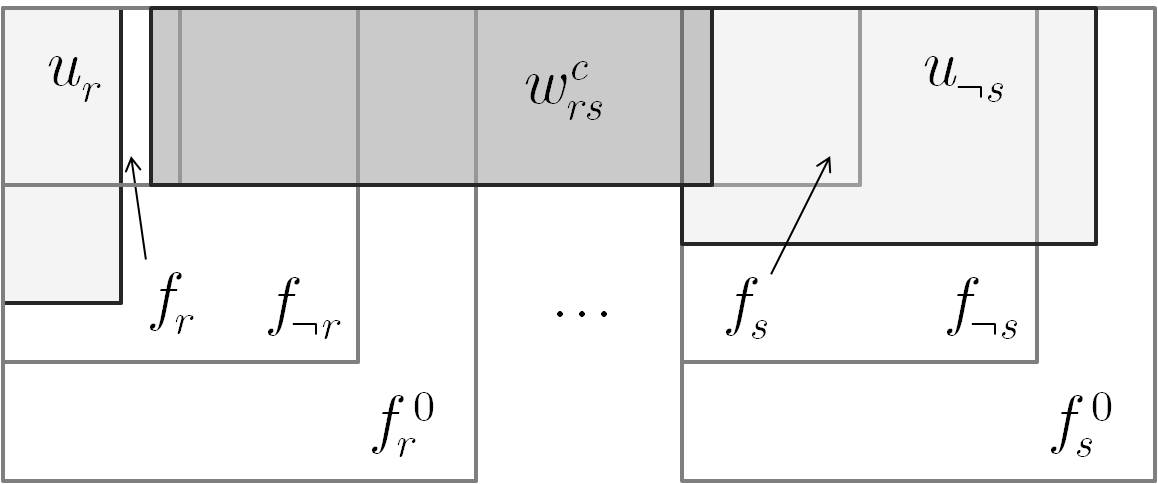}
\\
(a) & (b)\\
\includegraphics[width=.4\textwidth]{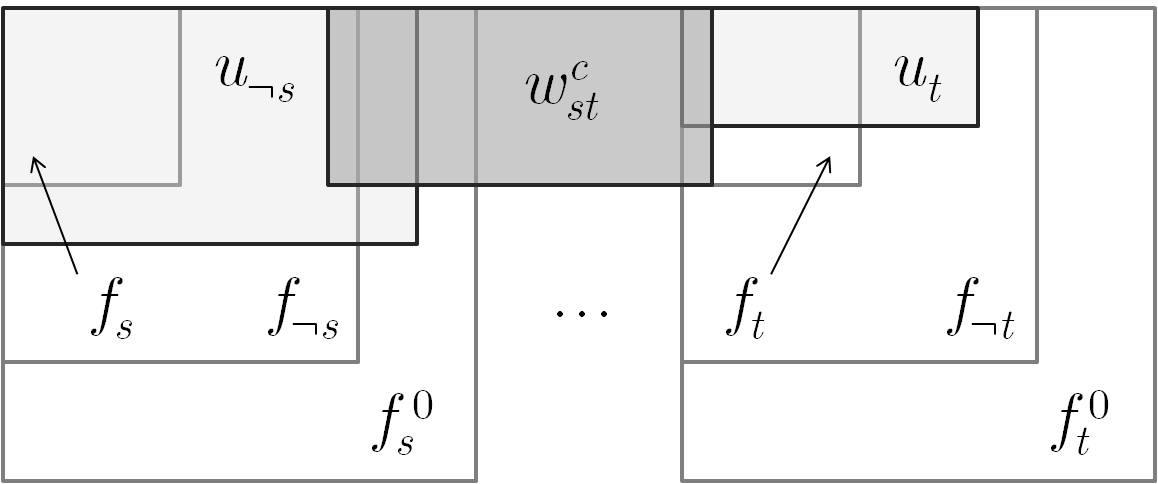}
&
\includegraphics[width=.4\textwidth]{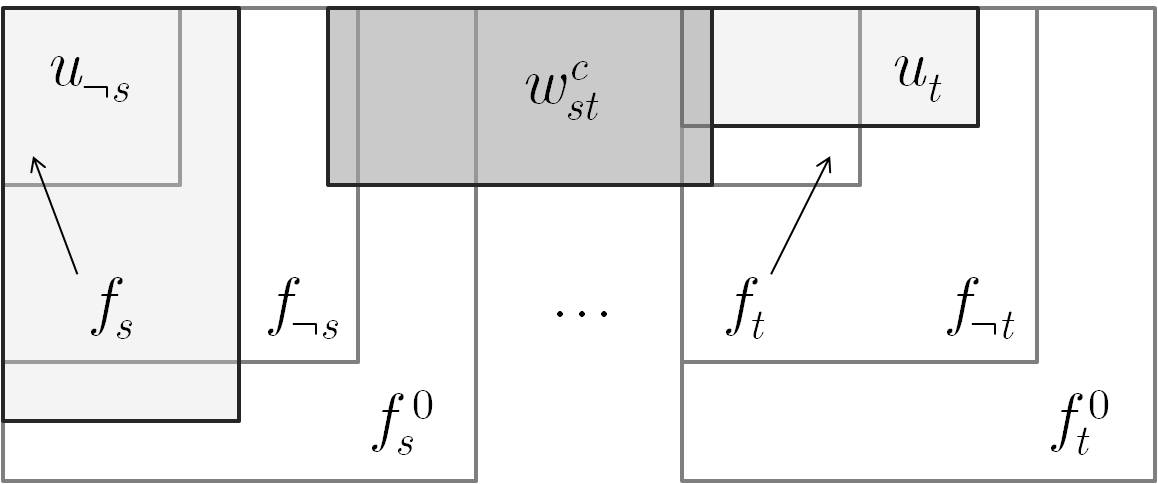}
\\
(c) & (d)
\end{tabular}
\caption{Possible configurations of $u_r$ and $u_{\neg s}$: (a) $u_r$ is horizontally instantiated; (b) $u_r$ is vertically instantiated; (c) $u_{\neg s}$ is horizontally instantiated; (d) $u_{\neg s}$ is vertically instantiated.}
\label{fig:cl2}
\end{figure}
If $u_r$ is horizontally instantiated, then the gap between $u_r$ and $u_{\neg s}$ is certainly bridged by $w^c_{rs}$ (Figure~\ref{fig:cl2}(a)); if $u_r$ is vertically instantiated, then it is possible to make $u_r$ `thin' enough so that the gap between $u_r$ and $u_{\neg s}$ is maintained  (Figure~\ref{fig:cl2}(b)). Similar results hold for $u_{\neg s}$ (see Figure~\ref{fig:cl2}(c)(d) for illustration). In case $u_r$, $u_{\neg s}$, and $u_t$ are all horizontally instantiated, then there is no gap between any consecutive two of the seven regions. Otherwise, if any of $u_r$, $u_{\neg s}$, and $u_t$ is vertically instantiated, then it is possible to maintain some gap.
\end{example}

Combining with the constraints involving the spatial variable $v_c$, we are now ready to introduce $\net_c$.
\begin{dfn}\label{dfn:net_c}
The basic CDC network $\net_c$ for $c=p_r^\ast \vee p_s^\ast \vee p_t^\ast$ contains the basic CDC constraints in $\net_V$ (see Definition~\ref{dfn:net_frame}), and the basic CDC constraints used, explicitly or implicitly, in Eq.s~\ref{eq:gap-condition-1}-\ref{eq:w^t_c}.
\end{dfn}

Note two new parallel relations are introduced in $\net_c$. The spatial variable set of $\net_c$ includes those in $\net_V$, and $v_c,w^c_0,w^c_{rs},w^c_{st},w^c_1$, and two auxiliary variables for constructing parallel relations.

\begin{prop}\label{prop:bridge}
Suppose $\net_c$ is the basic CDC network defined for clause $c\equiv p_{r}^\ast\vee p_s^\ast \vee p_t^\ast$.
In any solution of $\net_c$, if $u^\ast_r$ ($u^\ast_s$, $u^\ast_t$, resp.) is horizontally instantiated,
then its mbr bridges the gap between $\mbr(w^c_0)$ ($\mbr(w^c_{rs})$, $\mbr(w^c_{st})$, resp.) and $\mbr(w^c_{rs})$ ($\mbr(w^c_{st})$, $\mbr(w^c_1)$, resp.).
\end{prop}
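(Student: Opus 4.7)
The plan is to prove only the statement for $u^\ast_r$; the cases for $u^\ast_s$ and $u^\ast_t$ will follow by relabelling, using the same pattern of constraints from Eqs.~\ref{eq:w^r_c}--\ref{eq:w^t_c}. ``Bridges the gap between $\mbr(w^c_0)$ and $\mbr(w^c_{rs})$'' is a claim about $x$-projections, so I will work entirely with $x$-coordinates and aim to show $I_x(u^\ast_r)\supseteq[x^+(w^c_0),x^-(w^c_{rs})]$. The first step is to unpack the hypothesis: by Definition~\ref{dfn:horizontal}, $u^\ast_r$ being horizontally instantiated means $\mbr(u^\ast_r)$ is $\sfsi\otimes\sff$-related to its ULC-frame $f$, where $f=f_r$ if $p^\ast_r=p_r$ and $f=f_{\neg r}$ if $p^\ast_r=\neg p_r$. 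Reading off the $x$-component, $x^-(u^\ast_r)=x^-(f)$ and $x^+(u^\ast_r)>x^+(f)$.

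Next I would carry out a brief case split on the sign of $p^\ast_r$. Across both cases, the constraint $\sfo\otimes\sff(w^c_0,f_r)$ from Eq.~\ref{eq:wc0} yields $x^-(f_r)<x^+(w^c_0)<x^+(f_r)$. In the positive case ($f=f_r$), the left overlap follows immediately from $x^-(u^\ast_r)=x^-(f_r)<x^+(w^c_0)$, and the right overlap from $\sfo\otimes\sfeq(f_r,w^c_{rs})$ in Eq.~\ref{eq:w^r_c}, giving $x^-(w^c_{rs})<x^+(f_r)<x^+(u^\ast_r)$. In the negative case ($f=f_{\neg r}$), I would invoke the intra-$\net_p$ constraint $\sfs\otimes\sff(f_r,f_{\neg r})$ from Eq.~\ref{eq:other-constraints-in-net-p} to obtain $x^-(f_{\neg r})=x^-(f_r)$; then $x^-(u^\ast_r)=x^-(f_{\neg r})=x^-(f_r)<x^+(w^c_0)$ gives the left overlap, while $\sfo\otimes\sffi(f_{\neg r},w^c_{rs})$ from Eq.~\ref{eq:w^r_c} gives $x^-(w^c_{rs})<x^+(f_{\neg r})<x^+(u^\ast_r)$ for the right. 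Combining the two overlaps in either case, $I_x(u^\ast_r)$ contains the closed interval $[x^+(w^c_0),x^-(w^c_{rs})]$, which is exactly what bridging the gap means.

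The main (though mild) obstacle is the negative-literal case, where the horizontal-instantiation hypothesis is phrased relative to the wider frame $f_{\neg r}$ while the $w^c_0$-overlap constraint is phrased relative to the narrower $f_r$; the fix is to appeal to $\sfs\otimes\sff(f_r,f_{\neg r})$ from $\net_p$, which forces the two left edges to coincide so that the left-bridging inequality still lines up with $x^+(w^c_0)$. Once this alignment trick is in place, the symmetric treatment of $u^\ast_s$ (using $\sfo\otimes\sfeq(w^c_{rs},f_s)$ from Eq.~\ref{eq:w^r_c} on the left, the appropriate disjunct of Eq.~\ref{eq:w^s_c} on the right, and $\sfs\otimes\sff(f_s,f_{\neg s})$ if $p^\ast_s$ is negative) and of $u^\ast_t$ (using Eq.~\ref{eq:w^s_c} on the left, Eq.~\ref{eq:w^t_c} on the right, and $\sfs\otimes\sff(f_t,f_{\neg t})$ if $p^\ast_t$ is negative) is a direct copy of the argument.
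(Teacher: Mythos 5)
Your proof is correct and follows essentially the same route as the paper's: a case split on the sign of $p_r^\ast$, using the horizontal-instantiation relation to the frame $f_r$ (resp.\ $f_{\neg r}$), the constraints $\sfo\otimes\sff(w^c_0,f_r)$ and the appropriate disjunct of Eq.~\ref{eq:w^r_c}, and the left-edge alignment $x^-(f_r)=x^-(f_{\neg r})$ coming from $\sfs\otimes\sff(f_r,f_{\neg r})$; you merely spell out with explicit endpoint inequalities what the paper leaves to the frame configuration of Figure~\ref{fig:frame}. The only presentational difference is that you reduce ``bridging'' to $x$-projections, whereas the paper also records that the top edges of the relevant mbrs lie on a common line, a fact of the construction that is used later in Proposition~\ref{prop:netc} rather than being essential here.
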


\begin{proof}
Recall that if $u_r$ is horizontally instantiated, then $I_x(f_r)\ \sfs\ I_x(u_r)\ \sfs\ I_x(f_{\neg r})$ and
$I_x(f_r)\ \sfs\ I_x(u_{\neg r})\ \sfs\ I_x(f_{\neg r})$; if $u_{\neg r}$ is horizontally instantiated, then $I_x(u_r)\ \sfs\ I_x(f_r)$
and $I_x(f_{\neg r})\ \sfs\ I_x(u_{\neg r})\ \sfs\ I_x(f_r^0)$ (cf. Figure~\ref{fig:frame}). Furthermore,
note that the top edges of the mbrs of these regions are on the same line. It is easy to see that $\mbr(u_r)$ ($\mbr(u_{\neg r})$, resp.)
bridges the gap between $\mbr(w_0^c)$ and $\mbr(w_{rs}^c)$ if $u_r$ ($u_{\neg r}$, resp.) is horizontally instantiated and $p_r$ is positive
(negative, resp.) in the clause. Note that $u_r^\ast=u_r$ ($u_r^\ast=u_{\neg r}$, resp.) if $p_r$ is positive (negative, resp.) in the clause.
The proposition follows directly.
\end{proof}

We now show that $\net_c$ has the following property.

\begin{prop}\label{prop:netc}
Suppose $\net_c$ is the basic CDC network defined for clause $c=p_r^\ast \vee p_s^\ast \vee p_t^\ast$ ($r<s<t$). Assume, moreover,  $\pi:V\rightarrow \{\true,\false\}$ is a truth assignment. Then the following statements are equivalent:
\begin{itemize}
\item $\pi$ satisfies $c$, that is, at least one of the three literals $p_r^\ast$, $p_s^\ast$, and $p_t^\ast$ is $\true$ under $\pi$;
\item $\net_c$ has a solution in which $u_i$ is vertically instantiated  iff $\pi(p_i)=\true$ for $i=r,s,t$.
\end{itemize}
\end{prop}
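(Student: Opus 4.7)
My plan is to prove both implications by combining Proposition~\ref{prop:bridge}, Proposition~\ref{lemma:2}, and a concrete extension of the solution given in Example~\ref{ex:solution_of_reference_objects}. The first step is to establish the dictionary between truth values and vertical/horizontal states for the three relevant literals. By the definition of $u_i^\ast$ in Eq.~\ref{eq:u_i^star} and by Proposition~\ref{lemma:2}, for each $i\in\{r,s,t\}$ the literal $p_i^\ast$ evaluates to $\true$ under $\pi$ if and only if $u_i^\ast$ is vertically instantiated: if $p_i^\ast=p_i$ this is the hypothesis linking $u_i$ to $\pi(p_i)$; if $p_i^\ast=\neg p_i$ then $u_i^\ast=u_{\neg i}$, which by Proposition~\ref{lemma:2} is vertical iff $u_i$ is horizontal iff $\pi(p_i)=\false$ iff $p_i^\ast=\true$. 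Under this dictionary the stated equivalence reduces to: $\pi$ satisfies $c$ iff $\net_c$ has a solution in which at least one of $u_r^\ast,u_s^\ast,u_t^\ast$ is vertically instantiated.

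For the ($\Leftarrow$) direction I will argue by contrapositive. Suppose $\pi$ falsifies $c$, so that in any candidate solution all three of $u_r^\ast,u_s^\ast,u_t^\ast$ are horizontally instantiated. By Proposition~\ref{prop:bridge}, the mbrs of these three regions bridge the three gaps of the chain $w_0^c,w_{rs}^c,w_{st}^c,w_1^c$, so together with Eq.s~\ref{eq:wc0}--\ref{eq:w^t_c} the seven mbrs in $X_c$ cover the entire strip $I_x(v_c)\times I_y(w_0^c)$, exactly as illustrated in Figure~\ref{fig:w2}. On the other hand, the constraints $w_0^c\ O\ v_c$ and $v_c\ E\!:\!SE\!:\!S\ w_0^c$ force $\mbr(v_c)$ to share its upper-left corner with $\mbr(w_0^c)$, while Eq.~\ref{eq:gap-condition-1} forbids $v_c^\circ$ from meeting any $x\in X_c$. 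But then the top strip of $\mbr(v_c)$ would have to contain an interior point of $v_c$ disjoint from the covering mbrs, which is impossible, a contradiction.

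For the ($\Rightarrow$) direction I will construct an explicit solution extending that of Example~\ref{ex:solution_of_reference_objects}. For each $p_i\in V$ I orient the dual pair $u_i,u_{\neg i}$ using the two coordinate choices given there, picking the orientation so that $u_i$ is vertical iff $\pi(p_i)=\true$. Then I place the pier rectangles $w_0^c,w_{rs}^c,w_{st}^c,w_1^c$ at the height of $w_{ref}$, with $x$-projections dictated by Eq.s~\ref{eq:wc0}--\ref{eq:w^t_c}; the alternatives between $f_i$ and $f_{\neg i}$ in Eq.s~\ref{eq:w^r_c}--\ref{eq:w^t_c} are determined by the signs of the literals. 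Because $\pi$ satisfies $c$, at least one of $u_r^\ast,u_s^\ast,u_t^\ast$ is vertical, and by taking this thin rectangle narrow enough (as in Figure~\ref{fig:cl2}(b,d)) a genuine horizontal gap remains between some two consecutive elements of $X_c$. Finally I define $v_c$ as the regular closure of $\mbr(v_c)\setminus\bigcup_{x\in X_c}x$, where $\mbr(v_c)$ is the rectangle spanning from the upper-left corner of $w_0^c$ to the upper-right corner of $w_1^c$ and extending south far enough for the tile constraints in Eq.s~\ref{eq:gap-condition-2}--\ref{eq:gap-condition-3} to hold, as illustrated in Figure~\ref{fig:w3}.

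The main obstacle will be the verification in the last step: checking that the resulting $v_c$ is a connected nonempty regular closed region and satisfies precisely the tile patterns required by Eq.s~\ref{eq:gap-condition-1}--\ref{eq:gap-condition-3}. Connectedness requires that the chosen gap be reachable from the south side of every $x\in X_c$, which is why $\mbr(v_c)$ must extend sufficiently far south; and the tile patterns must be checked mbr-by-mbr to ensure $v_c$ meets $E,SE,S,SW,W$ of each interior $x$, and only $E,SE,S$ of $w_0^c$ and only $S,SW,W$ of $w_1^c$, without ever intruding into the $O$ tile of any $x$. The rest of the construction is straightforward bookkeeping given the explicit coordinates of Example~\ref{ex:solution_of_reference_objects}.
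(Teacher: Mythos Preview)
Your proposal is correct and follows essentially the same approach as the paper's own proof: the contrapositive direction via Proposition~\ref{prop:bridge} and the covering/gap argument, and the forward direction via an explicit coordinate extension of Example~\ref{ex:solution_of_reference_objects}, with $v_c$ taken as the regularized complement. One small correction: the fact that $v_c^\circ$ is disjoint from each $\mbr(x)$, $x\in X_c$, comes from the absence of the tile name $O$ in Eq.s~\ref{eq:gap-condition-2}--\ref{eq:gap-condition-3}, not from Eq.~\ref{eq:gap-condition-1} (which only asserts $\mbr(x)\subseteq\mbr(v_c)$); and note that only $w_0^c,w_1^c$ sit at the height of $w_{ref}$, while $w_{rs}^c,w_{st}^c$ sit at the height of the $f_i$'s, as forced by Eq.s~\ref{eq:w^r_c}--\ref{eq:w^s_c}.
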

\begin{proof}
We first prove that, if $\pi$ does not satisfy $c$, then $\net_c$ has no solution in which $u_i$ is vertically instantiated iff $\pi(p_i)=\true$ for $i=r,s,t$.
Or equivalently, if $\pi(p^\ast_i)=\false$ for $i=r,s,t$, then $\net_c$ has no solution in which $u^\ast_i$ is horizontally instantiated for $i=r,s,t$.
We prove this statement by contradiction. Suppose $\net_c$ has a solution in which $u_i^\ast$ is horizontally instantiated for $i=r,s,t$. Then, by Proposition~\ref{prop:bridge}, $u_r^\ast$ bridges the gap between $\mbr(w^c_0)$ and $\mbr(w^c_{rs})$; $u_s^\ast$ bridges the gap between $\mbr(w^c_{rs})$ and $\mbr(w^c_{st})$; and $u_t^\ast$ bridges the gap between $\mbr(w^c_{st})$ and $\mbr(w^c_1)$ (cf. Figure~\ref{fig:w2}). Let
$$A_c=\mbr(v_c)\setminus (\bigcup\{\mbr(x):x\in X_c\}),$$
where $X_c$ is defined as in Eq.~\ref{eq:Xc}.
It is clear that $\mbr(A_c)$ is a proper subset of $\mbr(v_c)$. Because the interior of $v_c$ is disjoint from $\mbr(x)$ ($x\in X_c$), we have $v_c\subseteq A_c$. This leads to a contradiction. Therefore, if $\pi$ does not satisfy $c$, then $\net_c$ has no solution in which $u_i$ is vertically instantiated iff $\pi(p_i)=\true$ for $i=r,s,t$.

On the other hand, suppose $\pi$ satisfies $c$. We construct a solution of $\net_c$ in which $u_i$ is vertically instantiated iff $\pi(p_i)=\true$ for $i=r,s,t$. Variables in $\net_V$ other than $u_i,u_{\neg i}\ (1\leq i\leq n)$ are defined as the same in Example~\ref{ex:solution_of_reference_objects}.

For each propositional variable $p_i$, we define $u_i$ and $u_{\neg i}$ as follows (cf. Figure~\ref{fig:var}).
\begin{eqnarray}\label{eq:u_i}
 u_i = \left\{
   \begin{array}{l l}
  \mbox{$[i,i+0.2]\times [0.5,1]$}, & \mbox{if $\pi(p_i)$ is $\true$},\\
  \mbox{$[i,i+0.5]\times [0.8,1]$},  & \mbox{otherwise.}\\
  \end{array}
  \right.\\
\label{eq:u_i^prime}
 u_{\neg i} = \left\{
   \begin{array}{l l}
  \mbox{$ [i,i+0.7]\times [0.6,1]$},  & \mbox{if $\pi(p_i)$ is $\true$},\\
  \mbox{$[i,i+0.4]\times [0.3,1]$}, & \mbox{otherwise.}\\
  \end{array}
  \right.
\end{eqnarray}
For clause $c\equiv p_r^\ast \vee p_s^\ast \vee p_t^\ast$, we define the bridge variables as follows.
\begin{align}\label{eq:w_c^0}
w^c_0&= [r-0.05,r+0.05]\times [0.9,1],\\
\label{eq:w_c^r}
w^c_{rs}&= \left\{
\begin{array}{l l}
  \mbox{$[r+0.25,s+0.05]\times [0.7,1]$}, & \mbox{if $p^\ast_r=p_r$,}\\
  \mbox{$[r+0.55,s+0.05]\times [0.7,1]$},  & \mbox{otherwise.}\\
\end{array}
\right.\\
\label{eq:w_c^s}
w^c_{st}&= \left\{
\begin{array}{l l}
  \mbox{$[s+0.25,t+0.05]\times [0.7,1]$}, & \mbox{if $p^\ast_s=p_s$,}\\
  \mbox{$[s+0.55,t+0.05]\times [0.7,1]$},  & \mbox{otherwise.}\\
\end{array}
\right.\\
\label{eq:w_c^t}
w^c_1&= \left\{
\begin{array}{l l}
  \mbox{$[t+0.25,t+0.85]\times [0.9,1]$}, &  \mbox{if $p^\ast_t=p_t$,}\\
  \mbox{$[t+0.55,t+0.85]\times [0.9,1]$},  & \mbox{otherwise.}\\
\end{array}
\right.
\end{align}

It is straightforward to verify that the three gaps between $w^c_0,w^c_{rs},w^c_{st},w^c_1$ are all bridged iff $u^\ast_r,u^\ast_s,u^\ast_t$ are all horizontally instantiated. Let $v_c$ be the region obtained by subtracting from $[r-0.05,t+0.85]\times [0,1]$ the union of $x$ ($x\in X_c$). Since the values of $p_r^\ast$, $p_s^\ast$, and $p_t^\ast$ are not all $\false$, we know that at least one of the gaps between the bridge variables are maintained. This guarantees that the above instantiation of $v_c$ satisfies all constraints in $\net_c$ involving $v_c$. Therefore, we have constructed a solution of $\net_c$ with the desired property.
\end{proof}

As a corollary, we know in particular that $\net_c$ is consistent, and at least one of $u_r^\ast$, $u_s^\ast$, and $u_t^\ast$ is vertically instantiated in any solution of $\net_c$.

\begin{dfn}\label{dfn:net_phi}
For a 3-SAT instance $\phi=\bigwedge_{j=1}^m c_j$ over $V=\{p_1,\cdots,p_n\}$, we define $\net_{c_j}$ as the basic CDC network for clause $c_j$ as in Definition~\ref{dfn:net_c}, and define $\net_\phi$ as the (incomplete) basic CDC network that is the union of all $\net_{c_j}$ ($1\leq j\leq m$).
\end{dfn}

We next show $\phi$ is satisfiable if $\net_\phi$ is satisfiable.
\begin{lemma}\label{coro:nece}
Let $\phi$ be a 3-SAT instance and let $\net_\phi$ be the basic CDC network of $\phi$.
If $\net_\phi$ is satisfiable, then $\phi$ is also satisfiable.
\end{lemma}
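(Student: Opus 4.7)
The plan is to extract a truth assignment $\pi$ directly from any given solution of $\net_\phi$ and then to show, clause by clause, that $\pi$ satisfies $\phi$. Suppose $\net_\phi$ has a solution, i.e.\ a tuple of regions instantiating all spatial variables used in $\bigcup_{j=1}^m \net_{c_j}$. Since each $\net_{c_j}$ contains $\net_V$, and $\net_V$ in turn contains every $\net_{p_i}$, Proposition~\ref{lemma:2} applies to each propositional variable: in the solution, $u_i$ is vertical w.r.t.\ $f_i$ iff $u_{\neg i}$ is horizontal w.r.t.\ $f_{\neg i}$. I therefore set $\pi(p_i)=\true$ precisely when $u_i$ is vertically instantiated (equivalently, when $u_{\neg i}$ is horizontally instantiated). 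This is a well-defined assignment on all of $V$.

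Now fix a clause $c_j\equiv p_r^\ast\vee p_s^\ast\vee p_t^\ast$. The restriction of the solution to the variables appearing in $\net_{c_j}$ is a solution of $\net_{c_j}$. By the corollary immediately following Proposition~\ref{prop:netc} (which says that in any solution of $\net_c$ at least one of $u_r^\ast,u_s^\ast,u_t^\ast$ is vertically instantiated), some $u_i^\ast$ with $i\in\{r,s,t\}$ must be vertical w.r.t.\ its associated frame in this solution. I then show that the corresponding literal $p_i^\ast$ is true under $\pi$: if $p_i^\ast=p_i$, then $u_i^\ast=u_i$ is vertical, so by definition $\pi(p_i)=\true$ and $p_i^\ast$ is satisfied; if $p_i^\ast=\neg p_i$, then $u_i^\ast=u_{\neg i}$ is vertical, which by Proposition~\ref{lemma:2} forces $u_i$ to be horizontal, so $\pi(p_i)=\false$ and again $p_i^\ast$ is satisfied.

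Because $j$ was arbitrary, every clause of $\phi$ is satisfied by $\pi$, and hence $\phi$ is satisfiable. There is no real obstacle here, as the heavy lifting has been done in Propositions~\ref{lemma:2} and \ref{prop:netc}; the only care needed is in matching the vertical/horizontal state of the spatial variable $u_i^\ast$ with the truth value of the literal $p_i^\ast$, which splits cleanly into the positive and negative literal cases handled above. Alternatively, one could appeal directly to the ``only if'' direction of Proposition~\ref{prop:netc} in contrapositive form: if $\pi$ did not satisfy some $c_j$, then $\net_{c_j}$ would admit no solution with the vertical/horizontal pattern dictated by $\pi$, contradicting the existence of the restricted solution.
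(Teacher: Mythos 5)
Your proposal is correct and follows essentially the same route as the paper: you define $\pi(p_i)=\true$ exactly when $u_i$ is vertically instantiated in the given solution, observe that the restriction to each $\net_{c_j}$ is a solution of $\net_{c_j}$, and conclude clause by clause via Proposition~\ref{prop:netc}. Your detour through the corollary (at least one $u_i^\ast$ vertical) together with Proposition~\ref{lemma:2} for negative literals is just an explicit unpacking of the same equivalence the paper cites directly, so there is no substantive difference.
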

\begin{proof}
Let $\sa$ be a solution of $\net_\phi$. Define a truth assignment $\pi:\{p_1,\cdots,p_n\}\rightarrow \{\true,\false\}$ as: $\pi(p_i)=\true$ if and only if $u_i$ is vertically instantiated in $\sa$. For each clause $c$ of $\phi$, we know $\sa$ is also a solution of $\net_c$. By definition of $\pi$, we know in particular that $u_i$ is vertically instantiated in $\sa$ if and only if $\pi(p_i)$ is true for $i=r,s,t$.
By Proposition~\ref{prop:netc}, $\pi$ satisfies $c$. Due to the arbitrariness of $c$, we know $\pi$ satisfies $\phi$. Therefore, $\phi$ is satisfiable.
\end{proof}
On the other hand, we show $\net_\phi$ is satisfiable only if $\phi$ is satisfiable.

\begin{lemma}\label{lemma:sufficiency}
Let $\phi$ be a 3-SAT instance and let $\net_\phi$ be the basic CDC network of $\phi$.
If $\phi$ is satisfiable, then $\net_\phi$ is also satisfiable.
\end{lemma}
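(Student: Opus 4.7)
The plan is to reduce this lemma to a direct application of Proposition~\ref{prop:netc}, pasted together across all clauses of $\phi$. Given a satisfying truth assignment $\pi$ for $\phi$, I will build a single assignment $\sa$ on the variables of $\net_\phi$ and then verify it satisfies every constraint in every $\net_{c_j}$.

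First, I would instantiate the variables of $\net_V$ once and for all, independently of any particular clause. The reference variables $w_{ref}, f_{ref}, f_{\neg ref}, f_{ref}^0$ and the frame variables $f_i, f_{\neg i}, f_i^0$ are set using the explicit rectangles in Example~\ref{ex:solution_of_reference_objects}. For each propositional variable $p_i$, the dual variables $u_i, u_{\neg i}$ are instantiated via Equations~\ref{eq:u_i}-\ref{eq:u_i^prime}, so that $u_i$ is vertically instantiated iff $\pi(p_i) = \true$. All auxiliary variables spawned by the ULC and parallel constraints inside $\net_V$ can be supplied as in Proposition~\ref{lemma:1} and Example~\ref{ex:net_parallel}, since those constraints do not couple across different occurrences. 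This yields a solution of $\net_V$ in which, for every $i$, the vertical/horizontal state of $u_i$ reflects $\pi(p_i)$.

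Second, for each clause $c_j \equiv p^\ast_r \vee p^\ast_s \vee p^\ast_t$ of $\phi$, I would extend $\sa$ to the clause-specific variables $w^{c_j}_0, w^{c_j}_{rs}, w^{c_j}_{st}, w^{c_j}_1$ and $v_{c_j}$, using Equations~\ref{eq:w_c^0}-\ref{eq:w_c^t} together with the construction of $v_{c_j}$ as the rectangle $[r-0.05,t+0.85]\times[0,1]$ with the mbrs of the elements of $X_{c_j}$ carved out (cf. Figure~\ref{fig:w3}). Since $\pi$ satisfies $c_j$, at least one of $u^\ast_r, u^\ast_s, u^\ast_t$ is vertically instantiated, so the gap condition holds and $v_{c_j}$ is nonempty and regularly closed. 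All auxiliary variables introduced locally inside $\net_{c_j}$ (for its ULC and parallel macros) are fresh and may be set independently. By construction, the restriction of $\sa$ to the variables of $\net_{c_j}$ is precisely the solution produced by the ``$\pi$ satisfies $c_j$'' half of Proposition~\ref{prop:netc}, hence satisfies every constraint of $\net_{c_j}$.

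The only real obstacle is the bookkeeping: I must check that the clause-independent choice of $u_i, u_{\neg i}$ through $\pi$ is simultaneously compatible with every $\net_{c_j}$, and that variables of distinct subnetworks can be kept disjoint. Both facts follow immediately from the design of $\net_\phi$: the variables shared among different $\net_{c_j}$'s lie entirely within $\net_V$ and are fixed once by $\pi$, while all pier variables $w^{c_j}_\bullet$, the region $v_{c_j}$, and each auxiliary variable of an ULC or parallel macro inside $\net_{c_j}$ are introduced freshly per clause occurrence. Thus the per-clause assignments agree on shared variables and combine into a single solution of $\net_\phi = \bigcup_{j=1}^m \net_{c_j}$, establishing the lemma.
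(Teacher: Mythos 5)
Your proposal is correct and follows essentially the same route as the paper: instantiate $\net_V$ once according to $\pi$ (frames as in Example~\ref{ex:solution_of_reference_objects}, duals via Eq.s~\ref{eq:u_i}--\ref{eq:u_i^prime}), then extend clause-by-clause with the pier variables and the carved-out $v_{c_j}$ exactly as in the constructive half of Proposition~\ref{prop:netc}, noting that distinct clauses interact only through the shared, already-fixed variables of $\net_V$. Your write-up merely spells out the bookkeeping that the paper's proof states in one line, so there is nothing to add.
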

\begin{proof}
Suppose $\pi:V \rightarrow \{\true,\false\}$ is a truth assignment. A solution for $\net_\phi$ can be constructed by following exactly the same procedures as we have used in Proposition~\ref{prop:netc}.
Note that there are no direct constraints between variables in $X_c$ and $X_c^\prime$, where $c,c^\prime$ are two different clauses. We instantiate $v_c$ as in Proposition~\ref{prop:netc}. It is easy to see that the assignment satisfies $\net_c$ for each clause $c$ of $\phi$. Therefore, the network $\net_\phi$ is also satisfiable.
\end{proof}

As a consequence of the above results, we have

\begin{thm}\label{thm:np-hard}
Deciding the consistency of a possibly incomplete basic CDC network is NP-hard.
\end{thm}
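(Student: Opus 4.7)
The theorem essentially consolidates the two preceding lemmas, so the plan is to package them into a formal polynomial-time many-one reduction from \textsc{3-SAT}. First I would recall that \textsc{3-SAT} is NP-complete, and then exhibit the reduction $\phi \mapsto \net_\phi$ from Definition~\ref{dfn:net_phi}. Lemma~\ref{coro:nece} gives the forward direction (satisfiability of $\net_\phi$ implies satisfiability of $\phi$), and Lemma~\ref{lemma:sufficiency} gives the reverse direction. Together they establish correctness of the reduction: $\phi$ is satisfiable iff $\net_\phi \in \rsat(\mathcal{B}_{dir}\cup\{\ast\})$.

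The remaining obligation is to verify that $\net_\phi$ is produced in polynomial time and only uses constraints from $\mathcal{B}_{dir}\cup\{\ast\}$. For the latter, I would note that every non-CDC ``shorthand'' constraint appearing in the construction ($\parallel$, $\ulcorner$, and the extended RA relations $\sfs\otimes\sff$, $\sfo\otimes\sff$, $\sfo\otimes\sffi$, $\sfo\otimes\sfeq$) is, by Examples~\ref{ex:net_sf} and~\ref{ex:net_parallel} and Proposition~\ref{lemma:1}, entailed by a fixed-size basic CDC sub-network over a constant number of auxiliary variables. Thus each shorthand expands to $O(1)$ basic CDC constraints and $O(1)$ auxiliary variables.

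For the polynomial size bound, I would count: $\net_V$ introduces $O(n)$ spatial variables (five per propositional variable, plus the four reference variables, plus $O(n)$ auxiliaries from the parallel relations and $O(n)$ from the ULC relations in each $\net_{p_i}$) and $O(n)$ basic CDC constraints. Each clause network $\net_c$ adds a constant number of spatial variables ($v_c$, $w^c_0,w^c_{rs},w^c_{st},w^c_1$, plus constantly many auxiliaries) and a constant number of basic CDC constraints (Eqs.~\ref{eq:gap-condition-1}--\ref{eq:w^t_c} together with the expansions of the shorthand constraints). Hence $\net_\phi$ has $O(n+m)$ variables and $O(n+m)$ basic constraints, which is linear in the size of $\phi$; moreover $\net_\phi$ is clearly constructible in polynomial time from $\phi$.

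The main obstacle here is not really technical—the two lemmas do the heavy lifting—but rather making the case that the network is a \emph{possibly incomplete} basic CDC network in the sense of $\rsat(\mathcal{B}_{dir}\cup\{\ast\})$: any pair of variables for which no basic CDC constraint has been imposed is implicitly constrained by the universal relation $\ast$. Once this is explicitly stated, the reduction is sound and polynomial, and NP-hardness follows immediately from the NP-completeness of \textsc{3-SAT}.
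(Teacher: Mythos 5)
Your proposal is correct and follows essentially the same route as the paper: invoke Lemmas~\ref{coro:nece} and~\ref{lemma:sufficiency} for correctness of the reduction $\phi\mapsto\net_\phi$, and note that $\net_\phi$ has size linear in the number of variables and clauses of $\phi$ and is constructible in polynomial time. Your extra bookkeeping (constant-size expansion of the shorthand constraints and the remark that unspecified pairs carry the universal relation $\ast$) just makes explicit what the paper's proof leaves to the reader.
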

\begin{proof}
We prove the NP-hardness of consistency checking of basic CDC networks by a reduction from 3-SAT. For each 3-SAT instance $\phi$, we define an incomplete basic CDC network $\net_\phi$.
Lemmas~\ref{coro:nece} and ~\ref{lemma:sufficiency} show that the 3-SAT instance $\phi$ is satisfiable if and only if the basic CDC network $\net_\phi$ is satisfiable. It is not hard to show that the total variables in $\net_\phi$ is linear to the total number of variables and clauses of $\phi$. This shows that the size of $\net_\phi$ is polynomial of the size of $\phi$. So we have reduced 3-SAT in polynomial time to the consistency problem of possibly incomplete basic CDC networks.
\end{proof}

To determine the consistency of a possibly incomplete basic CDC network, we non-deterministically replace all unspecified constraints with basic constraints and then determine the consistency of the complete basic CDC network by the cubic time algorithm introduced in \cite{Liu+2010}. This implies that the consistency problem of basic CDC networks is in NP. As a corollary of our main theorem, we have
\begin{coro}
Deciding the consistency of a possibly incomplete basic CDC network is NP-Complete.
\end{coro}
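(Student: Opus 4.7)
The plan is to combine the hardness result just established with a matching upper bound, so that NP-completeness falls out immediately. Theorem~\ref{thm:np-hard} already supplies NP-hardness via the polynomial reduction from 3-SAT developed throughout this section, so the only task remaining is to exhibit a nondeterministic polynomial-time procedure that decides consistency of a possibly incomplete basic CDC network.

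For NP membership, I would argue as follows. In a possibly incomplete basic CDC network over $n$ variables, each specified pair carries one of the $218$ basic CDC relations and each unspecified pair carries, semantically, the universal relation $\ast=\bigcup \mathcal{B}_{dir}$. A nondeterministic machine can therefore guess, for each of the at most $\binom{n}{2}$ unspecified pairs, a single basic relation from $\mathcal{B}_{dir}$; the total guess has polynomial size. The result is a \emph{complete} basic CDC network of polynomial size, whose consistency can be decided deterministically in $O(n^3)$ time by the algorithm of Liu et al.~\cite{Liu+2010}. By the definition of the universal relation, the original incomplete network is consistent if and only if at least one such refinement is consistent, so accepting whenever the guessed complete network is consistent yields a correct nondeterministic polynomial-time decision procedure.

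Putting the two directions together gives NP-completeness, which is precisely the corollary. I expect no real obstacle here: the only point that needs explicit mention is the equivalence between consistency of the incomplete network and consistency of some basic refinement, and this is an immediate consequence of $\ast=\bigcup \mathcal{B}_{dir}$ together with the definition of a solution. The cubic-time algorithm for complete basic networks from \cite{Liu+2010} does the rest of the work.
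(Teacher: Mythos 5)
Your proposal is correct and matches the paper's own argument: NP-hardness comes from Theorem~\ref{thm:np-hard}, and membership in NP is shown exactly as you do, by nondeterministically replacing each unspecified constraint with a basic CDC relation and checking the resulting complete basic network with the cubic-time algorithm of \cite{Liu+2010}. Your explicit remark that consistency of the incomplete network is equivalent to consistency of some basic refinement (since $\ast$ is the union of all basic relations) is a minor elaboration the paper leaves implicit.
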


\section{The Reduction to the Consistency Problem in CDC$_d$}
The CDC is defined for connected regions. Allowing regions to be disconnected, we obtain a variant of CDC, written CDC$_d$ in this paper. Is the reduction described in Section~4 applicable to CDC$_d$? The answer is yes! This is because, in the reduction, we do not use the connectedness property at all.  Note each basic CDC relation is contained in the corresponding basic CDC$_d$ relation. All basic constraints used in the reduction are also representable in CDC$_d$. Moreover, all definitions and results obtained in Section~3 can be applied to CDC$_d$. Therefore, we have

\begin{thm}\label{thm:cdc-d}
Deciding the consistency of a possibly incomplete basic CDC$_d$ network is NP-hard.
\end{thm}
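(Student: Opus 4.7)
The plan is to reuse the reduction constructed in Section~4 verbatim, and to verify at each step that allowing disconnected regions does not break either direction of the equivalence $\phi \text{ satisfiable} \Leftrightarrow \net_\phi \text{ consistent}$. The starting observation is that every basic CDC relation $\delta_1{:}\cdots{:}\delta_k$ coincides, as a constraint on pairs of regions, with the basic CDC$_d$ relation of the same name when both sides are interpreted on connected regions, and in general the basic CDC$_d$ relation is a weak extension allowing disconnected regions. So the network $\net_\phi$ as written is also a legitimate CDC$_d$ network, of the same polynomial size.

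Next I would walk through the definable relations of Section~3 and check that their defining networks still entail the intended mbr-level condition in CDC$_d$. The key point is that every constraint that is used (the single-tile constraints $u\ O\ v$, the multi-tile frames such as $w_1\ E{:}SE{:}S{:}O\ u$, etc.) is defined in terms of nonempty intersections of the interior with the 9-tiles of the reference mbr (Eq.~\ref{eq:dchi}); this definition is insensitive to whether the primary region is connected. Consequently $\net_{\sfs\otimes\sff}$, $\net_\parallel$, and most importantly $\net_\ulcorner$ still force the same mbr-level relations between the named variables; Proposition~\ref{lemma:1} and Proposition~\ref{lemma:2} carry over because their proofs argue purely about corner points of mbrs and about the tile-membership of interiors.

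For the ``only if'' direction (Lemma~\ref{coro:nece}): given a CDC$_d$-solution $\sa$ of $\net_\phi$, I would define the truth assignment $\pi(p_i)=\true$ iff $u_i$ is vertically instantiated w.r.t.\ $f_i$ at the mbr level, exactly as before. Proposition~\ref{prop:netc} goes through because the contradiction used in its proof only concerns $\mbr(v_c)$, the mbrs of the pier variables, and the bridging property of Proposition~\ref{prop:bridge}; all of these are statements about $x$- and $y$-projections, which are well-defined for arbitrary bounded regions. For the ``if'' direction (Lemma~\ref{lemma:sufficiency}): the explicit solution I would exhibit is precisely the one built in Example~\ref{ex:solution_of_reference_objects} and in the proof of Proposition~\ref{prop:netc}, which consists of axis-aligned rectangles and at worst the region $v_c$ obtained by subtracting rectangles from a rectangle. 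Since rectangles are trivially connected and the $v_c$ so constructed is a (possibly disconnected) regular closed set that satisfies all the required tile constraints, it is automatically a valid CDC$_d$-solution regardless of whether its $v_c$ happens to be connected.

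Combining these, $\net_\phi$ is CDC$_d$-consistent iff $\phi$ is satisfiable, and since $\net_\phi$ has size polynomial in $|\phi|$ (as already argued in the proof of Theorem~\ref{thm:np-hard}), this yields a polynomial reduction from 3-SAT to the consistency problem for possibly incomplete basic CDC$_d$ networks. The main ``obstacle'' is really just the bookkeeping verification that none of the entailment arguments in Section~3 and none of the gap/bridging arguments in Section~4 secretly used connectedness; once one confirms that every argument is made at the level of mbrs, $x$/$y$-projections, and tile-membership of interiors, the reduction transfers for free, and the NP-hardness of CDC$_d$ follows.
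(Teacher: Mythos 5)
Your proposal is correct and follows essentially the same route as the paper: the paper likewise observes that the Section~4 reduction never uses connectedness, that every basic CDC constraint employed is contained in the corresponding basic CDC$_d$ relation, and that the Section~3 entailment results (being statements about mbrs, projections and tile membership of interiors) transfer unchanged, so $\phi$ is satisfiable iff $\net_\phi$ is CDC$_d$-consistent. Your write-up merely makes explicit the verification that the paper states in one paragraph, so there is nothing to correct.
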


This suggests that the $O(n^5)$ algorithm devised in \cite{SkiadopoulosK05} for determining the consistency of basic CDC$_d$ networks is incomplete.

Suppose $\net$ is a possibly incomplete basic CDC$_d$ network. Algorithm \consistency\  \cite{SkiadopoulosK05} first transforms constraints in $\net$ into a network $O$ of Point Algebra (PA) constraints (which may also be incomplete). It then calls the \cspan\ algorithm of van Beek \cite{vanBeek1992} to compute a solution of $O$ and transforms the solution of $O$ into a maximal solution (cf. \cite{SkiadopoulosK05} for the definition). The algorithm then returns `consistency' if this particular maximal solution satisfies the NTB property, and returns `inconsistency' otherwise.

Because the network of constraints may be incomplete, the PA network $O$ may have exponentially many different (maximal) solutions.\footnote{Two solutions of a PA network are regarded as different if the orderings of the points in the two solutions are different.}  As a polynomial algorithm, Algorithm \cspan\ returns only one solution of $O$. It has been proved \cite[Theorem 3]{SkiadopoulosK05} that $O$ is consistent if and only if it has \emph{a} maximal solution which satisfies the NTB property. It is very likely that \emph{some} maximal solutions of $O$ satisfy the NTB property, while others do not. So if we want to assure $O$ is inconsistent, we need to try all different maximal solutions of $O$, which may take exponential time. Algorithm \consistency, however, checks this for only one maximal solution (constructed on the result of Algorithm \cspan). This explains why it is an incomplete algorithm for checking the consistency of basic CDC$_d$ networks.

Take the inconsistency basic network in \cite[Example 13]{SkiadopoulosK05} as an example. This network is defined as
\begin{equation}
\net=\{x\ N\!:\!E\!:\!O\ y, x\ O\!:\!S\!:\!W\ z, y\ SW\ z\}.
\end{equation}

The inconsistency of $\net$ is detected by Algorithm \cspan. In fact, the algorithm first transform $\net$ into a set $O$ of PA constraints, and then \cspan\ returns a solution of $O$, and then uses this solution to compute a maximal solution of $O$. Write $\mathfrak{m}$ for this maximal solution. The algorithm then returns `inconsistency' after showing that $\mathfrak{m}$ does not satisfy the NTB property. So far so good. Let $\net^\prime$ be the network obtained by removing the third constraint $y\ SW\ z$ from $\net$. It is easy to see that $\net^\prime$ is consistent.  For this network, a subset $O^\prime$ of $O$ is computed, and it is likely that the algorithm also takes $\mathfrak{m}$ as a maximal solution of $O^\prime$. If this is the case, the algorithm will return `inconsistency' for $\net^\prime$ because $\mathfrak{m}$ does not satisfy the NTB property. This is, however, incorrect.

\section{Conclusion}
In this paper, we have proved that deciding the consistency of basic but possibly incomplete CDC networks is an NP-hard problem. Combined with the tractable result reported in \cite{Liu+2010}, this draws a sharp boundary between the tractable and intractable subclasses of the CDC. It seems that the CDC is the first known qualitative calculus in which reasoning with conjunctive constraints is NP-hard, while reasoning with explicit constraints is in P. Our result is achieved by using a polynomial reduction from 3-SAT, which is also applied to CDC$_d$, the cardinal direction calculus for possibly disconnected regions. This suggests that the $O(n^5)$ algorithm in \cite{SkiadopoulosK05} is incomplete for checking the consistency of basic CDC networks. Future work will consider approximating methods for solving the consistency decision problem in the CDC.

\bibliographystyle{plain}
\bibliography{cdc-Hardness}

\begin{thebibliography}{10}

\bibitem{Allen83}
J.F. Allen.
\newblock Maintaining knowledge about temporal intervals.
\newblock {\em Communications of the ACM}, 26(11):832--843, 1983.

\bibitem{BalbianiCC99}
P.~Balbiani, J.-F. Condotta, and L.~Fari{\~n}as~del Cerro.
\newblock A new tractable subclass of the rectangle algebra.
\newblock In D.~Dean, editor, {\em Proceedings of the Sixteenth International
  Joint Conference on Artificial Intelligence (IJCAI-99)}, pages 442--447.
  Morgan Kaufmann, 1999.

\bibitem{CiceroneF04}
S.~Cicerone and P.~di~Felice.
\newblock Cardinal directions between spatial objects: the pairwise-consistency
  problem.
\newblock {\em Information Sciences}, 164(1-4):165--188, 2004.

\bibitem{Dechter03}
R.~Dechter.
\newblock {\em Constraint processing}.
\newblock Morgan Kaufmann Publishers, San Francisco, CA, 2003.

\bibitem{GoyalE97}
R.~Goyal and M.J. Egenhofer.
\newblock The direction-relation matrix: A representation for directions
  relations between extended spatial objects.
\newblock In {\em The Annual Assembly and the Summer Retreat of University
  Consortium for Geographic Information Systems Science}, 1997.

\bibitem{Goyal2000}
R.K. Goyal.
\newblock {\em {Similarity assessment for cardinal directions between extended
  spatial objects}}.
\newblock PhD thesis, The University of Maine, 2000.

\bibitem{GrigniPP95}
M.~Grigni, D.~Papadias, and C.H. Papadimitriou.
\newblock Topological inference.
\newblock In {\em Proceedings of the Forteenth International Joint Conference
  on Artificial Intelligence (IJCAI-95)}, pages 901--907, 1995.

\bibitem{KrokhinJJ03}
A.~Krokhin, P.~Jeavons, and P.~Jonsson.
\newblock Reasoning about temporal relations: The tractable subalgebras of
  allen's interval algebra.
\newblock {\em Journal of the ACM}, 50(5):591--640, 2003.

\bibitem{LiW06}
S.~Li and H.~Wang.
\newblock \mbox{RCC8} binary constraint network can be consistently extended.
\newblock {\em Artificial Intelligence}, 170(1):1--18, 2006.

\bibitem{Liu+2010}
W.~Liu, X.~Zhang, S.~Li, and M.~Ying.
\newblock Reasoning about cardinal directions between extended objects.
\newblock {\em Artificial Intelligence}, 174(12-13):951 -- 983, 2010.

\bibitem{NebelB95}
B.~Nebel and H.-J. B{\"u}rckert.
\newblock Reasoning about temporal relations: A maximal tractable subclass of
  \mbox{A}llen's interval algebra.
\newblock {\em Journal of the ACM}, 42(1):43--66, 1995.

\bibitem{RandellCC92}
D.A. Randell, Z.~Cui, and A.G. Cohn.
\newblock A spatial logic based on regions and connection.
\newblock In {\em Proceedings of the Third International Conference on
  Principles of Knowledge Representation and Reasoning (KR-92)}, pages
  165--176, 1992.

\bibitem{Renz07}
J.~Renz.
\newblock Qualitative spatial and temporal reasoning: Efficient algorithms for
  everyone.
\newblock In {\em Proceedings of the Twentieth International Joint Conference
  on Artificial Intelligence (IJCAI-07)}, pages 526--531, 2007.

\bibitem{RenzN99}
J.~Renz and B.~Nebel.
\newblock On the complexity of qualitative spatial reasoning: A maximal
  tractable fragment of the \mbox{R}egion \mbox{C}onnection \mbox{C}alculus.
\newblock {\em Artificial Intelligence}, 108:69--123, 1999.

\bibitem{SkiadopoulosK04}
S.~Skiadopoulos and M.~Koubarakis.
\newblock Composing cardinal direction relations.
\newblock {\em Artificial Intelligence}, 152(2):143--171, 2004.

\bibitem{SkiadopoulosK05}
S.~Skiadopoulos and M.~Koubarakis.
\newblock On the consistency of cardinal direction constraints.
\newblock {\em Artificial Intelligence}, 163(1):91--135, 2005.

\bibitem{vanBeek1992}
P.~van Beek.
\newblock Reasoning about qualitative temporal information.
\newblock {\em Artificial Intelligence}, 58(1-3):297 -- 326, 1992.

\end{thebibliography}

\end{document}